%% file: main.tex
\documentclass{article}

\PassOptionsToPackage{numbers, compress}{natbib}

\usepackage[preprint]{neurips_2026}

\usepackage[utf8]{inputenc} 
\usepackage[T1]{fontenc}    
\usepackage{hyperref}       
\usepackage{url}            
\usepackage{booktabs}       
\usepackage{amsfonts}       
\usepackage{nicefrac}       
\usepackage[nopatch=footnote]{microtype}      
\usepackage{xcolor}         
\usepackage{color_edits}
\usepackage{float}
\usepackage{upgreek}
\usepackage{amsbsy}
\usepackage{amssymb}
\usepackage{cancel}
\usepackage{amsmath}
\usepackage{amsthm}
\usepackage{graphicx}
\usepackage{setspace}
\usepackage{mathtools}
\usepackage{titlesec}
\usepackage{enumitem}
\usepackage{caption}
\usepackage{thmtools}
\usepackage{thm-restate}
\usepackage{subcaption}
\usepackage{adjustbox,multirow}
\usepackage[noabbrev]{cleveref}
\usepackage[ruled,vlined]{algorithm2e}
\usepackage{wrapfig}
\usepackage{titletoc} 
\input{math_macros}

\hypersetup{
    colorlinks,
    linkcolor={red!100!black},
    citecolor={blue!100!black},
    urlcolor={blue!80!black}
}
\usepackage[most]{tcolorbox} 

\definecolor{pirategold}{HTML}{BF9B30}
\definecolor{orangesignal}{HTML}{FE6100}
\definecolor{highlightsignal}{HTML}{FFB000}
\definecolor{purplesignal}{HTML}{785ef0}
\definecolor{bluesignal}{HTML}{648fff}
\definecolor{pinksignal}{HTML}{dc267f}

\definecolor{citationcolor}{HTML}{648fff} 
\definecolor{linkcolorcustom}{HTML}{648fff}

\hypersetup{ 
    colorlinks=true,       
    linkcolor=pinksignal,
    citecolor=citationcolor, 
    urlcolor=linkcolorcustom 
}

\title{Learning Orthonormal Bases for Function Spaces}

\author{%
  Hamidreza Kamkari\\
  MIT CSAIL\\
  Cambridge, MA\\
  \texttt{kamkarih@mit.edu} \\
  \And
  Mohammad Sina Nabizadeh\\
  MIT CSAIL\\
  Cambridge, MA \\
  \texttt{sinabiz@mit.edu} \\
  \AND
  Justin Solomon \\
  MIT CSAIL\\
  Cambridge, MA\\
  \texttt{jsolomon@mit.edu} \\
}

\begin{document}

\maketitle

\begin{abstract}
Infinite-dimensional orthonormal basis expansions play a central role in representing and computing with function spaces due to their favorable linear algebraic properties. However, common bases such as Fourier or wavelets are fixed and do not adapt to the structure of a given problem or dataset. In this paper, we aim to represent these bases with neural networks and optimize them. Our key idea is that any target infinite-dimensional orthonormal basis can be viewed either as a point on the Lie manifold of the orthogonal group, or equivalently, as the endpoint of a continuous path on that manifold that connects a reference basis, e.g.\ Fourier, to that target. 
Paths on the Lie manifold satisfy ordinary differential equations (ODEs) governed by skew-adjoint integral operators.
Using neural networks to define finite-rank generators of such ODEs allows us to parameterize and optimize orthonormal bases in function space.
While relying on finite-rank generators to model infinite operators might seem restrictive, we prove a universality result: even with a rank-2 generator, the integrated solutions of the ODE are dense in the orthogonal group under the appropriate operator topology. In other words, for any target orthonormal basis, there exists a path originating from a reference basis and driven by finite-rank generators that gets arbitrarily close to that target basis. We demonstrate the flexibility of our framework by transforming the Fourier basis into the principal components of a functional dataset, eigenfunctions of linear operators, or dynamic modes of energy-preserving physical simulations.
\end{abstract}

\input{neurips_sections/intro}

\input{neurips_sections/related_work}
\input{neurips_sections/math_setup}
\input{neurips_sections/theory1}

\input{neurips_sections/theory2}
\input{neurips_sections/implementation}
\input{neurips_sections/experiments}
\input{neurips_sections/conclusions}

\begin{ack}
The MIT Geometric Data Processing Group acknowledges the generous support of National Science Foundation grants IIS2335492 and OAC2403239, from the CSAIL Future of Data and FinTechAI programs, from the MIT--IBM Watson AI Laboratory, from the Wistron Corporation, from the MIT Generative AI Impact Consortium, from the Toyota--CSAIL Joint Research Center, and from Schmidt Sciences. The authors acknowledge further support from SideFX Software. The authors additionally declare that they have no competing interests.

We thank Cl\'ement Jambon, Artem Lukoianov, Christopher Scarvelis, Ana Dodik, Alice Petrov, Dolores Garcia Marti, and Ahmed Mahmoud for providing valuable feedback on our work. Hamidreza Kamkari extends a special thank you to  Yeganeh Gharedaghi for her instrumental contributions during the early stages of this project and her invaluable help with ideation.
\end{ack}

\bibliographystyle{plainnat}
\bibliography{main}

\newpage
\appendix

\input{appendix/existence_proof}
\newpage
\input{appendix/diagonalization-theory}

\newpage
\input{appendix/lie-integration}

\newpage
\input{appendix/experimental-details}



\end{document}

%% file: math_macros.tex
\newcommand{\reptitleinner}{} 

\newtheorem*{reptheoreminner}{\reptitleinner}
\newcommand{\newreptheorem}[2]{%
\newenvironment{rep#1}[1]{%
 \def\reptitleinner{#2 \ref*{##1}}%
 \begin{reptheoreminner}}%
 {\end{reptheoreminner}}}

\newtheorem{theorem}{Theorem}

\newreptheorem{theorem}{Theorem}

\newreptheorem{corollary}{Corollary}
\newtheorem{lemma}[theorem]{Lemma}

\newreptheorem{proposition}{Proposition}

\theoremstyle{definition}
\newtheorem{definition}{Definition}
\newreptheorem{definition}{Definition}

\crefname{equation}{}{}
\crefname{proposition}{Proposition}{Propositions}
\crefname{section}{Section}{Sections}
\crefname{appendix}{Appendix}{Appendices}
\crefname{lemma}{Lemma}{Lemmas}
\crefname{assumption}{Assumption}{Assumptions}
\crefname{algorithm}{Algorithm}{Algorithms}
\crefname{theorem}{Theorem}{Theorems}
\crefname{corollary}{Corollary}{Corollaries}
\crefname{insight}{Insight}{Insights}
\crefname{definition}{Definition}{Definitions}
\crefname{figure}{Figure}{Figures}
\crefname{table}{Table}{Tables}

\DeclareMathOperator{\E}{\mathbb{E}}

\DeclareMathOperator{\R}{\mathbb{R}}  

\def\ddefloop#1{\ifx\ddefloop#1\else\ddef{#1}\expandafter\ddefloop\fi}
\def\ddef#1{\expandafter\def\csname bb#1\endcsname{\ensuremath{\mathbb{#1}}}}
\ddefloop ABCDEFGHIJKLMNOPQRSTUVWXYZ\ddefloop
\def\ddefloop#1{\ifx\ddefloop#1\else\ddef{#1}\expandafter\ddefloop\fi}
\def\ddef#1{\expandafter\def\csname b#1\endcsname{\ensuremath{\mathbf{#1}}}}
\ddefloop ABCDEFGHIJKLMNOPQRSTUVWXYZ\ddefloop
\def\ddef#1{\expandafter\def\csname c#1\endcsname{\ensuremath{\mathcal{#1}}}}
\ddefloop ABCDEFGHIJKLMNOPQRSTUVWXYZ\ddefloop
\def\ddef#1{\expandafter\def\csname h#1\endcsname{\ensuremath{\hat{#1}}}}
\ddefloop ABCDEFGHIJKLMNOPQRSTUVWXYZ\ddefloop
\def\ddef#1{\expandafter\def\csname wh#1\endcsname{\ensuremath{\widehat{#1}}}}
\ddefloop abcdefghijklmnopqrstuvwxyz\ddefloop
\def\ddef#1{\expandafter\def\csname hc#1\endcsname{\ensuremath{\widehat{\mathcal{#1}}}}}
\ddefloop ABCDEFGHIJKLMNOPQRSTUVWXYZ\ddefloop
\def\ddef#1{\expandafter\def\csname t#1\endcsname{\ensuremath{\widetilde{#1}}}}
\ddefloop ABCDEFGHIJKLMNOPQRSTUVWXYZ\ddefloop
\def\ddef#1{\expandafter\def\csname tc#1\endcsname{\ensuremath{\widetilde{\mathcal{#1}}}}}
\ddefloop ABCDEFGHIJKLMNOPQRSTUVWXYZ\ddefloop

\usepackage{prettyref}
\usepackage{xcolor}

\renewcommand{\hat}{\widehat}

%% file: neurips_sections/intro.tex
\section{Introduction}
\label{sec:intro}

Orthonormal basis (ONB) expansions
are basic building blocks of various fields. They help us represent functions, run numerical simulations, and analyze data in machine learning. 
The choice of basis governs not only 
approximation 
accuracy but also the computational cost of downstream tasks. Often, the most useful basis is adapted to the problem at hand. For instance, in data analysis, principal components optimally capture variance, allowing one to analyze the data by looking at only the first few basis elements \cite{stoica2005spectral, abdi2010principal}. 
In physics and geometry, the first few spectral eigenfunctions approximate the dynamic or geometric modes of the system \cite{courant1954methods, schmid2010dynamic}. In signal processing, Fourier and wavelet series \cite{mallat1999wavelet, bremaud2002mathematical} help pinpoint various signal characteristics.

Neural networks flexibly parameterize complex function spaces and adapt to specific tasks via gradient-based optimization. Here, we leverage their expressiveness for the space of ONBs and ask:
\begin{center}
 \emph{Can we parameterize infinite series of orthonormal and continuous bases with neural networks?} 
\end{center}

Classical ONBs such as Fourier series provide a good starting point: they are infinite and orthonormal by construction, inner products decompose cleanly into coefficient sums, and projection  
requires a single inner product.
They are also continuous and discretization-free: they represent functions via linear combination of other functions on the entire domain, rather than as discrete values on a grid.
Thus, the resulting space can be evaluated, differentiated, or integrated at any point in the domain.

Any two ONBs, up to sign, are connected by a special orthogonal change-of-basis operator. Therefore, rather than directly modeling the space of ONBs, we can parameterize the equivalent orthogonal map that inputs a reference ONB---e.g.\ the Fourier series---and outputs a basis tailored to the application at hand. This construction allows the target ONB to inherit desirable properties of this reference: remaining infinite, continuous, and orthonormal.
\autoref{fig:teaser} (a) shows how parameterizing this operator allows us to map the Fourier basis to the principal components of the celebrity face dataset~\cite{kist_andreas_m_2021_5561092}, recover the spectrum of a linear operator, or extract the dynamic modes of a system. 

To tractably parameterize this change-of-basis operator, we draw on Lie theory \cite{hall2013lie}. Rather than optimizing an orthogonal operator directly, we view it as a continuous path across the manifold of the orthogonal group (see \autoref{fig:teaser}; c). Geometrically, this path corresponds to a sequence of infinitesimal rotations in function space, modeled as the flow of an ODE governed by a time-dependent, skew-adjoint infinitesimal generator. 
The orthogonal group forms a highly constrained manifold that is hard to parameterize, but the space of skew-adjoint generators is linear and easier to model with a neural network.
The remaining challenge is that even the space of generators is infinite-dimensional. 
To resolve this, we prove in \autoref{sec:so-inf} that any path on this infinite group can be arbitrarily well-approximated using a \emph{finite-rank} generator. 
Ultimately, this reduces the task of parameterizing and optimizing an infinite ONB to the following: we use a neural network to model a finite-rank, skew-adjoint generator and integrate the resulting ODE to smoothly change a reference basis into a target basis. By optimizing the network's parameters via neural ODE backpropagation \cite{chen2018neural}, we model a continuous path on the orthogonal group that adapts to our needs.

\begin{figure}\captionsetup{font=footnotesize}
    \centering
    \includegraphics[width=\textwidth]{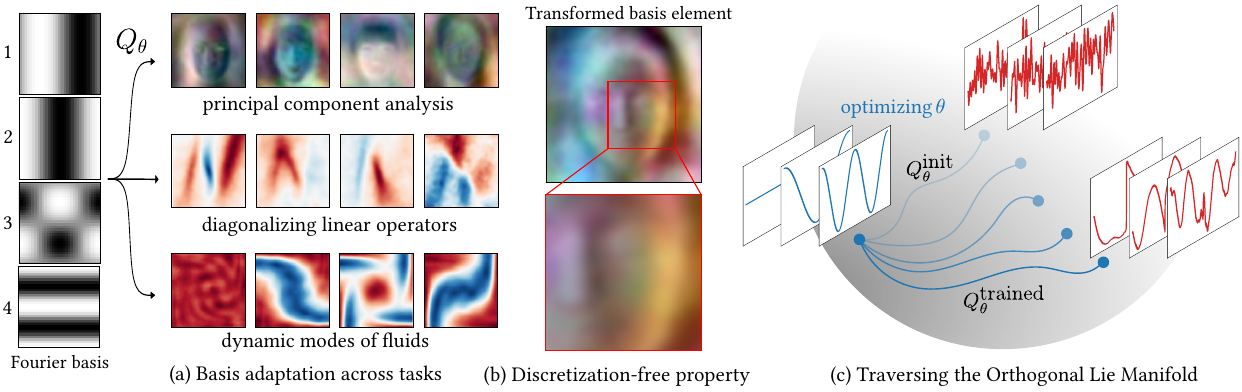}
    \caption{\textbf{Method and Applications Overview.} \textbf{(a)} Parameterizing a change-of-basis map $Q_\theta$ allows us to adapt a reference ONB to various applications: eigenfaces that describe the variance of the data, visualizing the training dynamics of a non-linear neural function by diagonalizing its neural tangent kernel, or dynamic modes of fluid-flow. \textbf{(b)} The learned bases inherit properties of the reference such as being discretization-free. \textbf{(c)} We model paths on the Lie manifold of the infinite orthogonal group and gradually transform an initial basis by traversing this manifold; by optimizing the path, we are able to learn ONBs.}
    \vspace{-0.7cm}
    \label{fig:teaser}
\end{figure}

Our contributions are: 
{(i)} reformulating orthonormal bases as a rotation path on the infinite-dimensional orthogonal group, achieved by solving an ODE; 
{(ii)} a universality theorem showing that a large family of rotations (specifically, those generated by Hilbert-Schmidt operators) can be approximated by the flow of an ODE governed by a low-rank skew-adjoint operator; 
{(iii)} a practical neural parameterization of this generator, trained end-to-end via ODE-based backpropagation, yielding an orthonormal, continuous, and infinite basis that can be adapted to variational problems; 
{(iv)} a principled variational objective for diagonalizing linear operators, demonstrated in two applications by recovering the eigenfunctions of the data covariance operator and the neural tangent kernel; and 
{(v)} applications beyond eigenproblems to modeling dynamical systems, demonstrating that our basis transformations can approximate the Koopman operator of fluids.

%% file: neurips_sections/related_work.tex
\section{Related Work} \label{sec:related_work}

\textbf{Basis expansions.} 
A principled way 
to discover a problem-adapted basis is to compute the eigenfunctions of an underlying linear operator. In physics, eigenfunctions of differential operators---such as spherical harmonics, Bessel functions, and Hermite polynomials---diagonalize the Laplacian 
and form the foundation of spectral methods for solving differential equations~\cite{gottlieb1977numerical, trefethen2000spectral, levy2006laplace,chang2024shape}. Alternatively, operators can be data-driven rather than analytic; for example, in principal component analysis (PCA), the principal components of a stochastic process are the eigenfunctions of its covariance operator~\cite{gerbrands1981relationships, abdi2010principal}. In other cases, bases are engineered rather than linked to operators; wavelet bases are one such example which are designed to capture local signal characteristics \cite{mallat1999wavelet}. 
Despite their different origins, ONBs are useful and unanimous across various fields involving numerical processing. 

\textbf{Eigendecomposition.} 
Solving an eigenproblem is a standard method for obtaining custom bases. The classical approach involves discretizing the operator as a matrix on a grid or mesh and applying numerical eigensolvers~\citep{saad2011numerical, golub2013matrix, stoica2005spectral}. However, these approaches inevitably incur discretization error in continuous state spaces. While solving the discretized system and applying post-hoc interpolation is a possible workaround~\citep{bengio2003out}, recent work instead suggests using neural networks to learn continuous eigenfunctions directly by solving variational problems. For instance, Spectral Inference Networks~\citep{pfau2018spectral} pair this approach with an explicit orthogonalization step, whereas NeuralEF~\citep{deng2022neuralef} avoids it by extending EigenGame~\citep{gemp2020eigengame} to function space. Further extending this paradigm, Ben-Shaul et al.~\citep{ben2023deep} target differential self-adjoint operators on free-form domains, and Azmoodeh et al.~\citep{azmoodeh2025continuous} address continuous-time PCA via the directly representing the ONBs. While these neural methods are discretization-free, they typically recover only a small, finite number of eigenfunctions without strictly enforcing orthogonality. Our framework, by contrast, maintains exact orthonormality across the full eigenfunction sequence by always remaining on the Lie manifold.

\textbf{Non-linear representations of function spaces.} 
An alternative to using orthonormal bases is to forego the linear structure of function spaces entirely in favor of highly expressive, non-linear neural parameterizations. Neural operators~\cite{li2020neural, li2020fourier, lu2021learning, kovachki2023neural, li2024physics}, implicit neural representations (INRs)~\citep{sitzmann2020implicit, mildenhall2021nerf, xie2022neural, dupont2022data}, and physics-informed neural networks (PINNs)~\cite{raissi2019physics} loosely fall into this category and have achieved success across image processing, computer graphics, and physical simulation~\citep{modi2024simplicits, sharp2023data}. However, while these representations adapt remarkably well to specific problems, they sacrifice the linear algebraic structure of the function space: linear operations in function space---such as addition or applying a differential operator---map to highly non-linear and unpredictable transformations in the network's weight space~\citep{xu2022signal, yuce2022structured, de2023deep}. While recent architectures like the Fourier tensor network~\citep{ashtari2026futon} have begun to address this by, for example, explicitly structuring INRs with orthogonal Fourier bases and low-rank tensor coefficients, our framework offers an alternative: we leverage the adaptive power of neural networks while strictly preserving the linear algebraic structure of the function space by parameterizing bases. We even demonstrate how to bridge these two worlds; in one of our applications, we take an arbitrary non-linear neural network and project it back into a structured function space by learning the continuous ONB that diagonalizes its neural tangent kernel.

%% file: neurips_sections/math_setup.tex
\section{Mathematical Preliminaries}\label{sec:math_setup}

\textbf{Function spaces.} We work with functions defined on a $d$-dimensional compact domain $\Omega$ equipped with finite measure $\mu$. We consider the real-valued Hilbert space $H \coloneqq L^2(\Omega, \mu)$, with its inner product and norm defined in standard fashion:
    $\langle f,g \rangle \coloneqq \int_\Omega f(\omega) g(\omega) \, \mu(d\omega) = \mathbb{E}_{\omega \sim \mu} [f(\omega) g(\omega)]$ and $\|f\|^2 \coloneq \langle f,f \rangle.$
Since $\mu$ is finite, we may without loss of generality assume it is a probability measure, allowing for Monte Carlo estimation of the inner product.
We assume $H$ is \emph{separable}, meaning it admits a countable complete ONB $\{ \varphi_n \}_{n=1}^\infty$. Concretely, for any $f \in H$,  
$\lim_{n \to \infty} \| f - \sum_{m=1}^n \langle f, \varphi_m\rangle \cdot \varphi_m \| = 0.$ 
For simplicity, we assume $\Omega$ possesses a standard Euclidean metric; thus, a valid ONB $\{ \varphi_n \}_{n=1}^\infty$ (among many) can be the standard $d$-dimensional real Fourier basis.
This basis may be suboptimal for a given problem, motivating us to change it to a problem-adapted one. Henceforth, we will use $\{ \varphi_i \}_{i=1}^\infty$ to denote the initial (reference) basis.

\textbf{Linear operators.}
A bounded linear transformation on $H$ is a \emph{linear operator}. Any such operator $A$ admits a unique adjoint denoted by $A^\ast$, defined 
such that:  $\langle f, Ag\rangle = \langle A^\ast f, g\rangle, \forall f, g \in H$. When $A$ can be written in an integral kernel form $A[\cdot, \cdot]: \Omega \times \Omega \to \R$, its action follows the identity:
\begin{equation}
    (Af)(x) = \int_{\Omega} A[x, y] \, f(y) \, \mu(dy); \quad \text{consequently,} \quad \forall x, y \in \Omega: A^\ast[x, y] = A[y, x].
\end{equation}
$A$ is \emph{skew-adjoint} if $A^\ast = -A$, or equivalently, $\forall x, y \in \Omega: A[x, y] = -A[y, x]$.

The space of operators themselves have norms and topology. Here, we use the \emph{Hilbert-Schmidt} norm:
\begin{equation}
    \| A \|_{\text{HS}}^2 \coloneqq \sum_{n=1}^\infty \|A\varphi_n\|^2 = \sum_{n=1}^\infty \langle A\varphi_n, A\varphi_n\rangle = \sum_{n=1}^\infty \langle \varphi_n, A^\ast A\varphi_n\rangle  \eqqcolon \operatorname{tr}(A^\ast A).
\end{equation}
where $\{ \varphi_n \}_{n=1}^\infty$ can be any complete ONB. An operator $A$ is {Hilbert-Schmidt} if this norm is finite---or equivalently, if $A^\ast A$ is \emph{trace-class}, meaning its trace is finite and well-defined. 

\textbf{Infinite-dimensional orthogonal group.} Any two ONBs, $\{\varphi_i\}_{i=1}^\infty$ and $\{\phi_i\}_{i=1}^\infty$, are related by an orthogonal change-of-basis operator $Q: H \to H$, where $Q^\ast Q = I$.

To make our search for a basis tractable, we borrow a concepts from Lie theory \cite{hall2013lie}: a large subset of orthogonal operators $Q$ can be characterized by a smooth manifold obtained via the exponentiation of a skew-adjoint operator. Concretely, let $Q(t)$ be a family of operators obeying the following ODE:
\begin{equation}\label{eq:exponential-map-basic}
\frac{d}{dt} Q(t) = K Q(t), \quad Q(0) = I, \qquad \text{where } K: H \to H \text{ is skew-adjoint}.
\end{equation}
The solution, $Q(t)$, remains orthogonal because the inner product $Q^\ast Q$ is preserved:
\begin{equation}\label{eq:orthoproof}
\frac{d}{dt} (Q^\ast Q) = \dot{Q}^\ast Q + Q^\ast \dot{Q} = Q^\ast K^\ast Q + Q^\ast K Q = Q^\ast (K^\ast + K)Q = 0.
\end{equation}
Searching over the smooth family of generators $K$ instead of $Q$, establishes a tractable search space. Formally, this allows us to work with the following continuous group:
\begin{definition}
The \textit{Hilbert-Schmidt orthogonal group}, denoted $\mathcal{SO}(HS)$, is the space of operators that can be written as solutions to the ODE in \eqref{eq:exponential-map-basic}, where $K$ is additionally Hilbert-Schmidt.
\end{definition}

%% file: neurips_sections/theory1.tex
\section{Approximation Theory of the Orthogonal Group}\label{sec:so-inf}

Our goal is to represent an orthogonal change-of-basis operator $Q \in \mathcal{SO}(HS)$ using the parameters $\theta$ of a neural network. As a first step, rather than directly modeling $Q$, we model its skew-adjoint infinitesimal generator $K_\theta(t)$ as a time-evolving operator:
\begin{equation} \label{eq:init-value-problem}
    \frac{d}{dt} Q(t) = K_\theta(t) Q(t), \qquad Q(0) = I,
\end{equation}
integrated to time $T > 0$.
Even when $K_\theta(t)$ is time-dependent,~\eqref{eq:orthoproof} shows $Q(t)$ is an orthogonal time-evolving operator. 
In a slight abuse of notation, we will denote this operator by $Q_\theta := Q(T)$, dropping the time evaluation and using $\theta$ emphasize its parameterization.

We have reduced the problem of designing an orthogonal operator to designing a skew-adjoint operator with a neural network. If the infinitesimal generator $K_\theta$ were to have finite rank $r=2$, we could represent its skew-adjoint kernel using two \emph{unrestricted} multilayer perceptrons (MLPs):
\begin{equation} \label{eq:rank-2-def}
    K_\theta(t)[x, y] = \text{MLP}_1(x, t; \theta) \cdot  \text{MLP}_2(y, t; \theta) - \text{MLP}_2(x, t; \theta) \cdot \text{MLP}_1(y, t; \theta).
\end{equation}

Although designing the generator to have a finite rank might appear as a restrictive choice, integrating low-rank dynamics over time allows one to approximate \emph{any} transformation in $\mathcal{SO}(HS)$. This is true even when the rank is as small as $r=2$. We now make this concrete:

\begin{restatable}[Approximating the Orthogonal Group via Rank-2 Generators]{theorem}{ApproximationTheorem}\label{thm:universal-appx}
Let $\mathcal{Q}_T$ denote the set of all operators $Q(T)$ obtained by solving the initial value problem \eqref{eq:init-value-problem} at time $T > 0$, where the generator $K(t)$ is uniformly Hilbert-Schmidt, continuous, and rank-$2$. Then, $\mathcal{Q}_T$ is dense in $\mathcal{SO}(HS)$ with respect to the Hilbert-Schmidt norm topology.
In other words, every rotation in this infinite orthogonal group is arbitrarily well-approximated by some time-evolving rank-2 generator.
\end{restatable}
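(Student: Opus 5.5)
The plan is to reduce in three steps: from a general element of $\mathcal{SO}(HS)$ to a finite-dimensional rotation, then to a product of planar (Givens) rotations, and finally to one continuous, time-dependent rank-2 generator. Since the HS norm is not finite on $I$, we read ``dense with respect to the Hilbert-Schmidt norm topology'' as $\|Q - Q'\|_{\text{HS}} < \varepsilon$. This is finite because $Q-I$ is Hilbert-Schmidt for $Q\in\mathcal{SO}(HS)$: indeed $Q = e^{TK}$, and $e^{TK}-I = K\sum_{k\ge1} T^k K^{k-1}/k!$ is Hilbert-Schmidt, being the product of an HS operator and a bounded one.

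\textbf{Step 1 (finite-rank truncation).} Fix $Q=e^{TK}$ with $K$ skew-adjoint and Hilbert-Schmidt. Let $P_N$ be the orthogonal projection onto $\mathrm{span}\{\varphi_1,\dots,\varphi_N\}$ and set $K_N := P_N K P_N$, which is skew-adjoint, of finite rank, and satisfies $\|K-K_N\|_{\text{HS}}\to 0$. We bound $\|e^{TK}-e^{TK_N}\|_{\text{HS}}$ with the Duhamel formula
\begin{equation*}
e^{TK}-e^{TK_N}=\int_0^T e^{(T-s)K}(K-K_N)e^{sK_N}\,ds .
\end{equation*}
Both exponentials are orthogonal, and the HS norm is invariant under multiplication by orthogonal operators, so the difference is at most $T\|K-K_N\|_{\text{HS}}$. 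Hence it suffices to approximate $e^{TK_N}$, which acts as a rotation $R\in SO(N)$ on $V_N:=P_NH$ and as the identity on $V_N^\perp$.

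\textbf{Step 2 (decomposition into planar rotations).} Every $R\in SO(N)$ is a finite product of Givens rotations $G_1\cdots G_m$. Each $G_j=\exp(\alpha_j(a_j b_j^\top - b_j a_j^\top))$ with $a_j,b_j\in V_N$ orthonormal, so its generator is exactly rank-2 and skew-adjoint, with kernel $a_j(x)b_j(y)-b_j(x)a_j(y)$. We split $[0,T]$ into $m$ subintervals. On the $j$-th subinterval we use the generator $c_j(t)\,\alpha_j(a_jb_j^\top-b_ja_j^\top)$, where the bump $c_j\ge 0$ is continuous, integrates to $1$, and vanishes at the endpoints. Generators that are scalar multiples of a fixed operator commute, so the flow over that interval is exactly $G_j$. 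Ordering the subintervals appropriately makes the time-ordered product equal $R\oplus I$ exactly. The resulting $K(t)$ is continuous, rank at most 2 at every time, and uniformly HS, since $\|K(t)\|_{\text{HS}}\le \sqrt2\max_j|\alpha_j|\,\|c_j\|_\infty$. If the statement requires rank exactly 2 at every instant, we perturb the zero generator near the endpoints by a tiny rank-2 term. By the same Duhamel estimate, this changes $Q(T)$ by an arbitrarily small amount in HS norm.

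\textbf{Step 3 (combine).} By Steps 1 and 2, $\|Q-Q(T)\|_{\text{HS}}\le T\|K-K_N\|_{\text{HS}}+\delta$, which is below $\varepsilon$ for large $N$ and small $\delta$. This proves density.

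The main obstacle is Step 1. We must control HS-norm differences of operator exponentials in infinite dimensions, and the Duhamel estimate only works because conjugating by orthogonal operators preserves the HS norm. We must also check that the time-ordered flow of a piecewise-continuous generator is well defined, which we handle by a Picard/Dyson series argument in the Banach space of bounded operators. The Givens decomposition and the gluing with continuous bumps are routine.
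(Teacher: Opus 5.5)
Your proof is correct and has the same two-stage architecture as the paper. First, approximate $Q$ by a rotation that is the identity off a finite-dimensional subspace (the paper's $\mathcal{SO}(\text{fin})$). Then realize that rotation exactly by chaining Givens generators switched on by continuous bumps that vanish at the junctions; your Step 2 is essentially the paper's first lemma.

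The difference is in the density step.
\begin{itemize}
\item The paper writes $K=\sum_j\lambda_j(v_j\otimes u_j-u_j\otimes v_j)$ using the spectral theorem for compact skew-adjoint operators and truncates spectrally. Each truncation $K_n$ then commutes with $K$, so $e^{K_n}-e^{K}=e^{K}\bigl(e^{K_n-K}-I\bigr)$.
\item You take the compression $P_NKP_N$ onto the first $N$ reference functions. It does not commute with $K$, and you compensate with Duhamel's formula and unitary invariance of the HS norm.
\end{itemize}
Your route avoids spectral theory and gives the explicit rate $T\|K-K_N\|_{\text{HS}}$. It also keeps the finite subspace aligned with the reference basis the method actually uses.

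It also handles the norm correctly where the paper slips. The paper's bound $\|\exp(K)\|_{\text{HS}}\cdot\|\exp(K_n-K)-I\|_{\text{HS}}$ has first factor $+\infty$, since $\exp(K)$ is orthogonal on an infinite-dimensional space. It must be repaired exactly as you do, via operator norm $1$ or unitary invariance. Likewise, your remark that $Q-I$ is HS is what gives ``density in HS norm'' a meaning.

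One caution on your optional rank-exactly-2 fix. Adding a separate small rank-2 term to $c_j(t)A_j$ generically gives rank $4$. Moreover, a generator cannot be a multiple of $A_j$ just before a junction and of $A_{j+1}$ just after while staying continuous and nonzero there. Instead, use a single wedge $\lambda(t)\bigl(a(t)\otimes b(t)-b(t)\otimes a(t)\bigr)$ with $\lambda(t)\ge\delta>0$, after swapping $a_j,b_j$ so that $\alpha_j>0$. Inside a short window around each junction, where the original generator has HS norm at most $\delta$, rotate the orthonormal pair $(a(t),b(t))$ continuously from $(a_j,b_j)$ to $(a_{j+1},b_{j+1})$. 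The time-dependent Duhamel formula then bounds the resulting change in $Q(T)$ by $O(\delta T)$ in HS norm. The paper never addresses this: its construction, like yours, has zero generator at the junctions, so it only establishes rank at most $2$.
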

\begin{proof}[Proof Sketch]
We first approximate an arbitrary $Q \in \mathcal{SO}(HS)$ with an operator that acts non-trivially only on a finite $n$-dimensional subspace. Restricted to this subspace, we have a constructive proof: this operator is isomorphic to an $n \times n$ orthogonal matrix. A classical result from linear algebra states that any finite-dimensional rotation can be factorized into a sequence of planar Givens' rotations \cite{golub2013matrix}. Each planar rotation is generated by a fixed rank-2 skew-adjoint operator and by using a time-dependent scalar control function, we can chain these stationary generators in sequence; effectively ``turning on'' each planar rotation one after the other. This constructs a single continuous time-varying rank-2 generator whose time-ordered integral results in the target finite-dimensional rotation. Finally, taking the limit of $n \to \infty$ in the Hilbert-Schmidt operator topology extends this approximation to the entire infinite-dimensional group $\mathcal{SO}(HS)$; for full details, see \autoref{appx:theorem1-proof}.
\end{proof}

\autoref{thm:universal-appx} thus establishes that an entire change-of-basis can be implicitly parameterized through a finite-rank $K_\theta$ as in~\eqref{eq:rank-2-def}.
Due to the universal approximation theorem of MLPs (the version that uses compactness of $\Omega$ by Hornik et al. \cite{hornik1989multilayer}), 
even the rank $r=2$ model in \eqref{eq:rank-2-def} with a sufficiently overparameterized $\theta$ can implicitly represent an arbitrary basis in function space.

%% file: neurips_sections/theory2.tex
\section{Variational Problems: Adapting the Basis to Various Tasks}\label{sec:variational}

The previous sections establish how a path on the orthogonal group manifold can be parameterized with a neural network and traversed while preserving structure. Our next task is to put this parameterization to use. To that end, we derive variational objective functions that reward or penalize paths on the Lie manifold and allow us to optimize it and find useful bases.

\textbf{Eigenproblems.} One class of variational problems is finding $Q_\theta$ that \emph{diagonalizes} an infinite-dimensional linear operators $A:H\to H$; this is akin to finding eigenvectors of $n\times n$ matrices in the finite case. In our formulation, we can freely search the space of operators $Q \in \mathcal{SO}(HS)$ to find one that diagonalizes $A$. To do so, we reformulate the eigenproblem in the variational language:

\begin{restatable}{theorem}{DiagonalizationTheorem}
\label{thm:diagonalization}
Let $A$ be a trace-class positive semidefinite linear operator, and let $\{ p(i) \}_{i=1}^\infty$ be a strictly decreasing sequence of positive weights such that $\sum_{i=1}^\infty p(i) = 1$, and $\{ \varphi_i \}_{i=1}^\infty$ be any orthonormal basis on $H$. Consider $Q^{\text{opt}}$ as the solution to the following variational problem:
\begin{equation}\label{eq:variational-optim-eig}
\begin{aligned}
     \max_{Q \in \mathcal{SO}(HS)} & \quad \sum_{i=1} p(i) \cdot \langle Q \varphi_i, A Q \varphi_i\rangle
\end{aligned}
\end{equation} 
Then, $\phi_i^\star \coloneqq Q^{\text{opt}} \varphi_i$ are the eigenfunctions of $A$. That is, $A \phi^\star_i = \lambda_i \phi^\star_i$ for all $i \ge 1$, where 
$\lambda_1 \ge \lambda_2 \ge \lambda_3 \ge \cdots$ are the corresponding eigenvalues.
\end{restatable}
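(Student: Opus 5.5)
The plan is a perturbative, first- and second-order optimality argument. Write $\psi_i \coloneqq Q^{\text{opt}}\varphi_i$; since $Q^{\text{opt}}$ is orthogonal, $\{\psi_i\}$ is again a complete ONB of $H$. Let $a_{ij} \coloneqq \langle \psi_i, A\psi_j\rangle$ be the matrix of $A$ in this basis, which is symmetric because $A$ is self-adjoint. The objective is then $J(Q^{\text{opt}}) = \sum_i p(i)\, a_{ii}$, and this series converges absolutely: $A$ is trace-class and positive, so $0 \le a_{ii}$, $\sum_i a_{ii} = \operatorname{tr}A < \infty$, and $p(i) \le 1$. I will show two things. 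First, $a_{ij}=0$ for all $i\neq j$. Second, $a_{11}\ge a_{22}\ge\cdots$. Together these give the claim: since $\{\psi_i\}$ is complete, $A\psi_j = \sum_i a_{ij}\psi_i = a_{jj}\psi_j$, so every $\psi_j$ is an eigenfunction with eigenvalue $\lambda_j \coloneqq a_{jj}$, and the eigenvalues are ordered.

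\textbf{Perturbations.} Fix $i\neq j$ and let $K_{ij}$ be the rank-2 skew-adjoint operator $K_{ij} f = \langle \psi_i, f\rangle \psi_j - \langle \psi_j, f\rangle \psi_i$. Then $R(s)\coloneqq e^{sK_{ij}}$ is the planar rotation by angle $s$ in $\operatorname{span}\{\psi_i,\psi_j\}$ and is the identity on the orthogonal complement. I need $R(s)Q^{\text{opt}}$ to be admissible in \eqref{eq:variational-optim-eig}. The natural justification is that concatenating the flow producing $Q^{\text{opt}}$ with the flow of the finite-rank, Hilbert--Schmidt generator $K_{ij}$ is again a Hilbert--Schmidt-generated flow. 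This is the same chaining device used in the proof of \autoref{thm:universal-appx}, and if necessary I would pass to the closure using that theorem. I expect this to be the main technical obstacle, because \emph{a priori} $\mathcal{SO}(HS)$ is defined through a single generator. If closure under left multiplication by $R(s)$ cannot be shown directly, I would instead read the maximization as taken over the closure of $\mathcal{SO}(HS)$ in the Hilbert--Schmidt topology. This requires checking that the objective is continuous in that topology, which follows from boundedness of $A$ and $\sum_i p(i)=1$.

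\textbf{First- and second-order conditions.} The perturbation $R(s)$ changes only the $i$-th and $j$-th basis vectors:
\begin{equation*}
\psi_i(s) = \cos s\,\psi_i + \sin s\,\psi_j,\qquad \psi_j(s) = -\sin s\,\psi_i + \cos s\,\psi_j .
\end{equation*}
Every other $\psi_k$ is fixed, so only two terms of the objective change, and a direct computation gives
\begin{equation*}
g(s) \coloneqq J\bigl(R(s)Q^{\text{opt}}\bigr)-J(Q^{\text{opt}}) = \bigl(p(i)-p(j)\bigr)\Bigl[\sin^2 s\,(a_{jj}-a_{ii}) + \sin 2s\; a_{ij}\Bigr].
\end{equation*}
Since $Q^{\text{opt}}$ is a maximizer, $g'(0)=0$, which gives $2\bigl(p(i)-p(j)\bigr)a_{ij}=0$. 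Strict monotonicity of $p$ makes $p(i)\neq p(j)$, hence $a_{ij}=0$; this is the diagonalization. With $a_{ij}=0$, maximality gives $g(s)\le 0$ for all $s$, so $\bigl(p(i)-p(j)\bigr)(a_{jj}-a_{ii})\le 0$. Taking $i<j$, we have $p(i)>p(j)$, so $a_{ii}\ge a_{jj}$, which is the ordering $\lambda_1\ge\lambda_2\ge\cdots$.

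\textbf{Remaining bookkeeping.} I would also verify that every eigenvalue of $A$ appears in the resulting list, including a zero eigenspace if one exists. This is automatic because $\{\psi_i\}$ is a complete ONB consisting of eigenfunctions of $A$. As a sanity check on attainment, an Abel-summation and Ky Fan argument shows that the objective is bounded above by $\sum_i p(i)\lambda_i^\downarrow$, the same sum taken with the eigenvalues of $A$ in decreasing order. However, the statement presupposes that $Q^{\text{opt}}$ exists, so the proof only needs the necessary conditions derived above.
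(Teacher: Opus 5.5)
Your argument is correct, and it takes a genuinely different route from the paper's.

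\textbf{Comparison with the paper.} The paper argues globally. It writes the objective as $\operatorname{tr}(\Upsilon\Phi^\ast A\Phi)$ with $\Upsilon$ diagonal with entries $p(i)$. It bounds this by $\sum_i p(i)\lambda_i$ using von Neumann's trace inequality and an eigenvalue comparison, which is essentially the Ky Fan/Abel-summation bound you mention as a sanity check. It then observes that a diagonalizing $\Phi$ attains the bound. That route identifies the optimal value and shows that an ordered eigenbasis is optimal (when one is available in $\mathcal{SO}(HS)$), so as written it proves sufficiency. The statement, however, asserts necessity: every maximizer is an ordered eigenbasis. Getting that from the bound would require an equality-case analysis of the trace inequality, which the paper does not carry out.

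Your local argument with the planar rotations $e^{sK_{ij}}$ proves exactly the asserted direction. The condition $2(p(i)-p(j))a_{ij}=0$ removes the off-diagonal entries, and $g\le 0$ at $s=\pi/2$ gives the ordering, with strict monotonicity of $p$ used transparently. It is also more general: it needs only that $A$ is bounded and self-adjoint. Since $|a_{ii}|\le\|A\|$ and $\sum_i p(i)=1$ already make the objective absolutely convergent, trace class and positivity are not needed. What your route does not give is the optimal value or the existence of a maximizer, but the statement presupposes existence.

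\textbf{The admissibility step.} The step you flag does need a precise justification, and neither proposed fix works as stated. Chaining flows produces a time-dependent generator, whereas $\mathcal{SO}(HS)$ is defined through a single constant Hilbert--Schmidt generator. \autoref{thm:universal-appx} approximates elements of $\mathcal{SO}(HS)$ by flows, which is the wrong direction for placing $R(s)Q^{\text{opt}}$ in (the closure of) $\mathcal{SO}(HS)$.

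Here is a self-contained fix.
\begin{enumerate}
\item Write $Q^{\text{opt}}=\exp(K)$ and take finite-rank skew-adjoint $K_n$ with $\|K_n-K\|\to 0$, e.g.\ the spectral truncations from the paper's density lemma. Duhamel's formula with orthogonal factors gives $\|e^{K}-e^{K_n}\|\le\|K-K_n\|$.
\item The operator $R(s)e^{K_n}$ preserves the finite-dimensional space $V_n=\operatorname{range}(K_n)+\operatorname{span}\{\psi_i,\psi_j\}$ and is the identity on $V_n^\perp$. Its restriction to $V_n$ is a product of exponentials of skew maps, so it lies in $SO(V_n)=\exp(\mathfrak{so}(V_n))$. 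Hence $R(s)e^{K_n}$ is $\exp$ of a finite-rank skew-adjoint operator and is admissible.
\item Therefore $J(R(s)e^{K_n})\le J(Q^{\text{opt}})$. Letting $n\to\infty$ with your Lipschitz bound $|J(Q_1)-J(Q_2)|\le 2\|A\|\,\|Q_1-Q_2\|$ gives $g(s)\le 0$ for all $s$, which is all your first- and second-order conditions use.
\end{enumerate}
Alternatively, you can show that $\mathcal{SO}(HS)$ is exactly the set of orthogonal $Q$ with $Q-I$ Hilbert--Schmidt and $\dim\ker(Q+I)$ even. That set is a group, so $R(s)Q^{\text{opt}}$ is admissible outright.
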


See \autoref{appx:diagonalization-theory} for a proof. 
Interpreting $p(\cdot)$ as a probability distribution on indices $i$ 
allows us to write the objective as an expectation. Furthermore, replacing $Q$ with $Q_\theta$ changes an otherwise intractable search over $\mathcal{SO}(HS)$ into one over parameters $\theta$, compatible with stochastic optimization:
\begin{align} \label{eq:unconstrained-optim}
    \max_{\theta} & \quad \mathbb{E}_{i \sim p} \langle Q_\theta \varphi_i, A Q_\theta \varphi_i \rangle.
\end{align}
Substituting various operators for $A$ allows us to solve different eigenproblems in function space. We provide an example below; \autoref{appx:diagonalization-theory} details another involving the neural tangent kernel.

\textbf{PCA in function space.}
We formulate PCA in function space as an eigenproblem.\footnote{PCA in function space is sometimes referred to as the Karhunen-Lo\'eve expansion in the literature \cite{stark1986probability, ghanem2003stochastic, wang2016functional}.} In this setting, we assume our data consists of random functions $X$ drawn from an $\Omega$-indexed stochastic process. This process can be formally viewed as a probability distribution $\mathbb{P}$ over the Hilbert space $H$. For simplicity of exposition, we assume that these functions have zero mean, $\mathbb{E}_{X \sim \mathbb{P}}[X] = 0$; we defer the treatment of the non-centered case to \autoref{appx:diagonalization-theory}.

The covariance operator $\cC : H \to H$ is defined by its action on any function $f \in H$ as $\cC f = \mathbb{E}_{X \sim \mathbb{P}}[X \langle X, f \rangle]$. Plugging this definition into 
\eqref{eq:unconstrained-optim} yields the following derivation:
\begin{equation}
\label{eq:unconstrained-pca}
\begin{aligned}
    \mathbb{E}_{i \sim p} \langle Q_\theta \varphi_i, \cC Q_\theta \varphi_i \rangle = \mathbb{E}_{i \sim p} \big\langle Q_\theta \varphi_i, \mathbb{E}_{X \sim \mathbb{P}} \big[ X \langle X, Q_\theta \varphi_i \rangle \big] \big\rangle = \mathbb{E}_{i \sim p, X \sim \mathbb{P}} \langle X, Q_\theta \varphi_i\rangle^2.
\end{aligned} 
\end{equation}
This objective maximizes the explained variance of the stochastic process $\mathbb P$. Practically, the optimization algorithm is as follows: sample $i \sim p$,  
retrieve the corresponding initial (Fourier) basis function $\varphi_i$, apply $Q_\theta$, sample a random function $X \sim \mathbb{P}$, estimate the squared projection $\langle X, Q_\theta \varphi_i\rangle^2$ using random quadrature points $\hat{\Omega}$, finally maximize via stochastic gradient ascent on $\theta$.

\textbf{Koopman operators.} 
We also cover an application that cannot be formulated as an eigenproblem.
For a dynamical system governed by a state transformation $\Psi: \Omega \to \Omega$, the associated Koopman operator $\mathcal{U}$ \cite{koopman1931hamiltonian} acts on observable functions via composition $\mathcal{U}f \coloneqq f \circ \Psi$. When
$\Psi$ is 
volume-preserving (e.g.\ incompressible flow), 
$\mathcal{U}$ becomes orthogonal. Because 
$Q_\theta$ is orthogonal by design, it provides a suitable inductive bias to learn these dynamics. To that end, we directly fit $Q_\theta$ to the Koopman $\cU$ by minimizing the expected projection error:
\begin{equation} \label{eq:koopman-training}
\min_{\theta} \quad \E_{i\sim p} \| Q_\theta \varphi_i - \varphi_i \circ \Psi\|^2.
\end{equation}
Since both $Q_\theta$ and $\cU$ are linear, training them to match on the complete basis $\{ \varphi_i \}$ is sufficient to fit $Q_\theta$ to $\cU$. In turn, the learned ONB $\phi_i = Q_\theta \varphi_i$ represents the \emph{dynamic modes} of the system.

This formulation becomes particularly valuable when  
$\Psi$ cannot be numerically simulated in a strictly volume-preserving manner. In such cases, learning $Q_\theta$ that is orthogonal by design guarantees energy preservation in the system---$\forall f \in H: \|Q_\theta f\|^2 = \| f \|^2 $; even when $\| f \circ \Psi \|^2 \neq \|f\|^2$ numerically. 
In \autoref{fig:taylor_vortices}, we demonstrate this via a simple fluid dynamical flow: by repeatedly transforming a Fourier basis using $Q_\theta$, we generate future state snapshots that strictly preserve norm, despite being trained on imperfect numerical approximations of $\Psi$.

%% file: neurips_sections/implementation.tex
\section{Algorithmic Details}
\label{sec:implementation_details}

\begin{figure}\captionsetup{font=footnotesize}
    \centering
    \includegraphics[width=\textwidth]{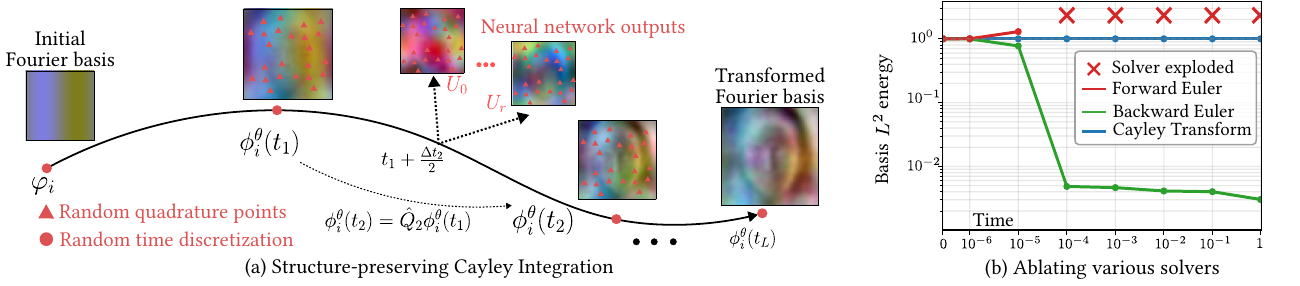}
    \caption{\textbf{Numerical Pipeline.} \textbf{(a)} A Fourier basis is gradually transformed into principal functions of CelebA dataset. We select quadrature points $\hat{\Omega} = \{ \omega_j \}_{j=1}^D \sim \mu$ and time-discretization $\{t_0=0, \ldots, t_L=T\}$ and perform $L$ Cayley steps. Each step takes $\cO(r^3 + r^2D)$ that exploits the rank $r$ structure of the generator. \textbf{(b)} Using off-the-shelf ODE solvers result in loss or explosion of the function norm, by contrast, Cayley integration keeps it intact.}
    \label{fig:low_rank_function_space_rotation}    
    \vspace{-0.5cm}
\end{figure}

Our objectives require the action of $Q_\theta$ on the initial basis functions $\varphi_i$. 
Defining $\phi_i^\theta(t) \coloneqq Q_\theta(t) \varphi_i$ and substituting into \eqref{eq:init-value-problem}, we obtain
\begin{equation}\label{eq:basis-ode}
    \frac{d}{dt} \phi_i^\theta(t) = K_\theta(t) \phi_i^\theta(t), \qquad \phi_i^\theta(0) = \varphi_i.
\end{equation}
In principle, one could integrate \eqref{eq:basis-ode} with off-the-shelf ODE solvers and optimize $\theta$. However, standard solvers like forward and backward Euler accumulate numerical error over time and violate the orthonormal structure; we have included a comparison in  \autoref{fig:low_rank_function_space_rotation} with more details in \autoref{appx:lie-integration}. We therefore use a structure-preserving Lie group integrator \cite{diele1998cayley, iserles2001cayley, celledoni2014introduction} that preserves inner products.

\textbf{Structure-preserving ODE integration.}
Our Lie integrator is the Cayley method \cite{iserles2001cayley}, a structure preserving method which discretizes time as $t_0 = 0 \le t_1 < \ldots < t_L = T$ and advance the system from $t_{\ell-1}$ to $t_\ell$. For each $\ell$, the generator is approximated by a midpoint rule $\hat{K}_\ell = K_\theta(t_{\ell-1} + \tfrac{\Delta t_\ell}{2})$ and the system is evolved by applying $Q_\ell = \exp\left( \Delta t \cdot \hat{K}_\ell\right)$. An important step in the Cayley method is that it replaces the exponential with its $(1,1)$-Pad\'e approximant which preserves inner products:
\begin{equation}\label{eq:cayley-action}
    \phi_i^\theta(t_\ell) \approx \hat{Q}_\ell\, \phi_i^\theta(t_{\ell-1}), \qquad \hat{Q}_\ell := \left(I - \tfrac{\Delta t}{2}\hat{K}_\ell\right)^{-1}\left(I + \tfrac{\Delta t}{2}\hat{K}_\ell\right) \approx Q_\ell.
\end{equation}
One can verify algebraically that if $\hat{K}_\ell$ is skew-adjoint, then $\hat{Q}_\ell$ is orthogonal~\citep{golub2013matrix, gallier2006remarks}, so the Cayley update preserves the norms and inner products of the basis exactly at each step. 

Applying \eqref{eq:cayley-action} requires discretization
at some quadrature points $\hat{\Omega} = \{\omega_j\}_{j=1}^D \sim \mu$, where the $\omega_j$ are drawn via stratified sampling.
To avoid overfitting to any discretization, we resample the points before each ODE solve.
Hence, discretely, $\hat{\phi}_i^\theta(t) \in \R^D$, $\hat{Q}_\ell \in \R^{D \times D}$, and $\hat{K}_\ell \in \R^{D \times D}$ are no longer functions and operators, but random vectors and matrices. Similarly, we resample in each training iteration $t_\ell \sim \operatorname{Unif}(0,T)$ to avoid overfitting to any particular time discretization.

\textbf{Low-rank parameterization.}
While it is mathematically valid to parameterize $K_\theta(t)$ using a rank-$2$ generator in \eqref{eq:rank-2-def}, we generalize it to higher ranks $r \ge 2$ for more expressiveness. In particular,
\begin{equation}\label{eq:generator-parametrization}
    K_\theta(t) = U^\ast_\theta(t)\bigl(M_\theta(t) - M_\theta(t)^\top\bigr) U_\theta(t),
\end{equation}
where $U_\theta(t): H \to \mathbb{R}^r$ has $r$ rows (i.e.\ $r$ functions in $H$) and $M_\theta(t) \in \mathbb{R}^{r \times r}$ is an arbitrary matrix. 
By construction, $K_\theta(t)$ is skew-adjoint, has rank $\leq r$, and depends smoothly on $\theta$.
In practice, $U_\theta$ is implemented as a network that inputs both time $t$ and coordinate $\omega \in \Omega$ and outputs $r$ values, and $M_\theta$ is implemented as another network that inputs time $t$ and outputs an $r \times r$ matrix.
Numerically, $K_\theta$ is only evaluated on the midpoints $t_\ell + \tfrac{\Delta t_\ell}{2}$ and quadrature points $\hat{\Omega}$, producing $\hat{K}_\ell, \hat{U}_\ell$, and $\hat{U}_\ell^\ast$.

\textbf{Efficient Solver.} 
The quadrature set must be large enough ($1024 \le D \le 4096$ for most experiments) to accurately approximate the underlying function-space operations. Na\"ively applying $\hat{Q}_\ell$ to $\hat{\phi}_i^\theta$ would require $\cO(D^3)$ matrix vector products and linear solves; however, leveraging the low-rank structure of $\hat{K}_\ell$ allows us to be much more efficient: Each step amounts to applying $\hat{Q}_\ell$ to the input $\hat{\phi}_i^\theta$. Due to the low-rank structure, the forward action of applying $\bigl(I + \tfrac{1}{2}\hat{K}_\ell\bigr)$ can be done in $\mathcal{O}(r^2 + rD)$  time. On the other hand, for the inverse action $(I - \tfrac{1}{2}\hat{K}_\ell)^{-1}$, we apply the Woodbury identity \cite{higham2002accuracy}:
\begin{equation}\label{eq:woodbury}
    \left(I - \tfrac{1}{2}\hat{K}_\ell\right)^{-1} = I + \tfrac{1}{2} \hat{U}^\ast_\ell \hat{S}_\ell \left(I_{r \times r} - \tfrac{1}{2} \hat{U}_\ell \hat{U}^\ast_\ell \hat{S}_\ell\right)^{-1} \hat{U}_\ell,
\end{equation}
where $\hat{S}_\ell \coloneqq M_\theta(t_\ell + \tfrac{\Delta t_\ell}{2}) - M_\theta(t_\ell + \tfrac{\Delta t_\ell}{2})^\top$.
The Gram matrix $U_\theta U_\theta^\ast \in \mathbb{R}^{r \times r}$ is a small matrix of inner products, which we estimate on $\hat{\Omega}$ with $\cO(r^2D)$ operations. Hence, the total backward action reduces to $\cO(r^3+r^2 D)$ linear solves and matrix vector products, which are all linear in $D$.

\autoref{fig:low_rank_function_space_rotation} illustrates the algorithm in action. For full details on the integrator, the values of $L, r,$ and $D$, ablations, and the architecture used for each component, see \autoref{appx:lie-integration}. The algorithm presented here only involves real-valued functions, for experiments involving multiple channels, such as the CelebA dataset with RGB, we use a multichannel variant explained in the same appendix.

%% file: neurips_sections/experiments.tex
\begin{figure}[t]\captionsetup{font=footnotesize}
    \centering
    \includegraphics[width=\linewidth]{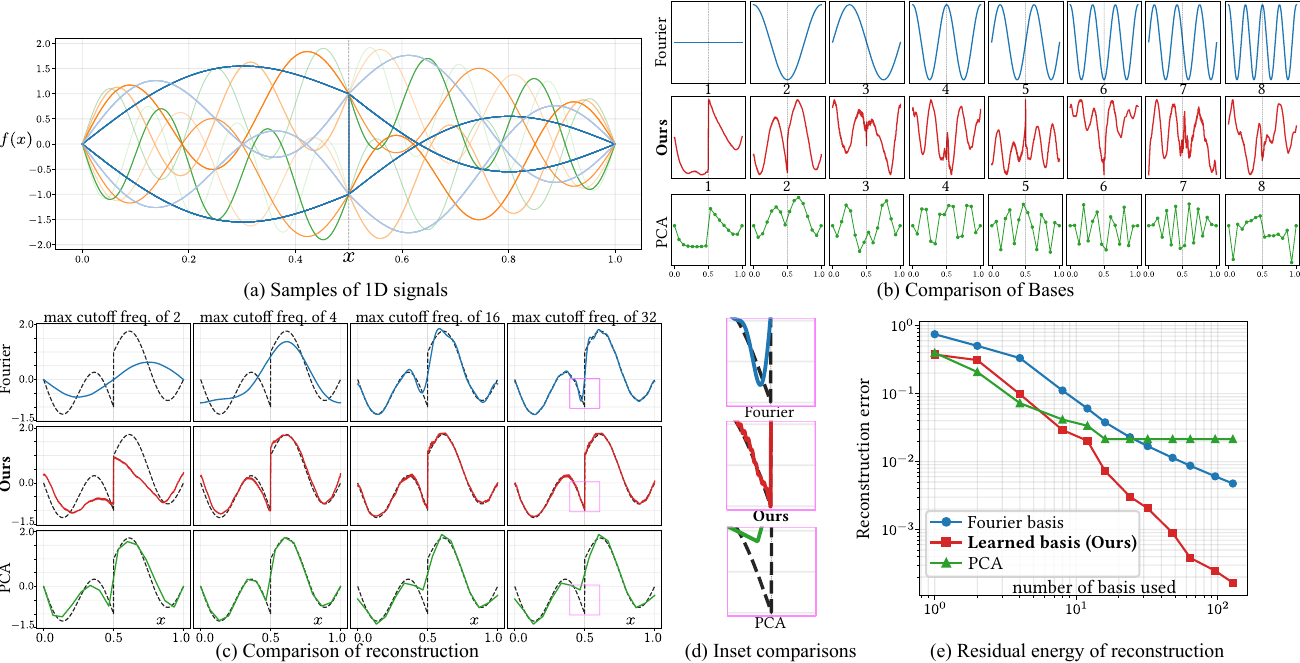}
    \caption{\textbf{1D Synthetic PCA on Function Space.} \textbf{(a)} Samples from a distribution of signals described by \eqref{eq:1d-synthetic}, exhibiting a jump at $x=0.5$ on a 1D domain. \textbf{(b)} The Fourier basis is transformed to diagonalize the covariance operator, allowing the basis to better capture the variance of signals. Our results improve upon the traditional finite-dimensional PCA construction, which is limited by the Nyquist cutoff inherent to that setting. \textbf{(c)} Reconstruction of several sample signals from the distribution. Our learned basis yields better reconstructions across all cutoff frequencies. \textbf{(d)} In particular, finite PCA is limited by Nyquist limits and the Fourier basis exhibits the Gibbs ringing phenomenon at the discontinuity. \textbf{(e)} A quantitative comparison of reconstructions averaged across the distribution of signals. The reconstruction error is shown on a log scale. We achieve an order-of-magnitude improvement using roughly 100 basis elements. }
    \label{fig:1D_fPCA}
    \vspace{-0.2cm}
\end{figure}

\section{Experiments} \label{sec:experiments}

In this section, we present experiments that evaluate our learned bases. Each task requires specific configuration choices. We focus on the primary takeaways and defer details and additional experiments to \autoref{appx:experiment-details}. Our codebase is available at \href{https://github.com/HamidrezaKmK/Learning-ONBs}{[https://github.com/HamidrezaKmK/Learning-ONBs]}.

\textbf{1D functional PCA.} We evaluate our approach using a synthetic 1D experiment on the domain $\Omega = [0, 1]$ equipped with a uniform measure $\mu$. To create a deliberately challenging scenario for standard Fourier methods, we sample functions with a sharp discontinuity at $x = 0.5$ (see \autoref{fig:1D_fPCA}):
\begin{equation} \label{eq:1d-synthetic}
    f(x) = \sigma \begin{cases}
        \sin(2\pi k x) + 2x & \text{if } x \le 0.5, \\
        -\sin(2\pi k x) + 2x - 2 & \text{if } x > 0.5,
    \end{cases}
\end{equation}
where the scale factor $\sigma \sim \text{Rademacher}$ takes values in $\{-1, 1\}$ with equal probability, and the frequency multiplier $k \sim \text{Geometric}(1/3)$ is a strictly positive integer. 
While discontinuous signals are notoriously difficult for standard Fourier reconstruction, our method overcomes this issue, also surpassing the baseline finite PCA reconstruction with as few as 10 basis functions. Moreover, as shown in \autoref{fig:1D_fPCA}, while finite PCA is optimal for discrete data, its grid-based discretization prevents it from fitting signals beyond the Nyquist rate. Our formulation bypasses this bottleneck.

\begin{figure}\captionsetup{font=footnotesize}
    \centering
    \includegraphics[width=\linewidth]{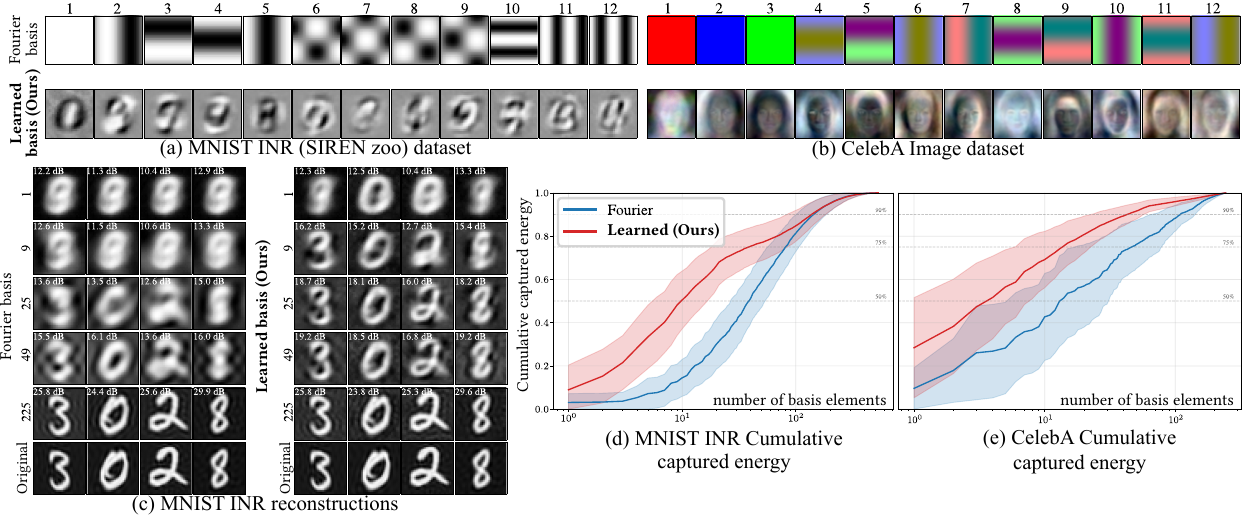}
    \caption{\textbf{2D PCA in Function Space.} \textbf{(a)} Fourier basis transformed to explain the INR zoo of SIREN networks representing the MNIST dataset. Note that the resulting basis represents a dataset that is itself discretization-free. \textbf{(b)} Transformed bases for the CelebA dataset. \textbf{(c)} Reconstruction of several datapoints using the Fourier basis and our transformed, optimized basis. With only a few basis elements, our basis reconstructs the signals more faithfully and captures the shape of the digits better than the Fourier baseline.  \textbf{(d)} Quantitative comparison of the cumulative captured energy (mean $\pm$ std across the dataset). Our learned basis captures more energy than the Fourier baseline. Because our basis is infinite, it gradually achieves full reconstruction without a Nyquist limit; any residual error stems from numerical integration.
    } \label{fig:fPCA_INR_zoo_and_CelebA}
    \vspace{-0.5cm}
\end{figure}

\textbf{Real Data PCA.} To maintain a discretization-free setting, we use the INR Zoo \cite{pmlr-v202-navon23a}, an MNIST \cite{lecun2010mnist} dataset where each datapoint is a SIREN network \cite{sitzmann2020implicit} overfitted to an image. 
We have a similar setup for a subset of 1024 images from CelebA, using 
quadrature points on a 
$64 \times 64$ grid since INRs were unavailable. 
In both cases, our method learns to transform the Fourier basis into principal functions of these datasets in \autoref{fig:fPCA_INR_zoo_and_CelebA}.
As illustrated, the learned basis achieves better reconstruction than the Fourier baseline and the first few basis elements capture most of the signal energy of the dataset.

\begin{figure}\captionsetup{font=footnotesize}
    \centering
    \includegraphics[width=\linewidth]{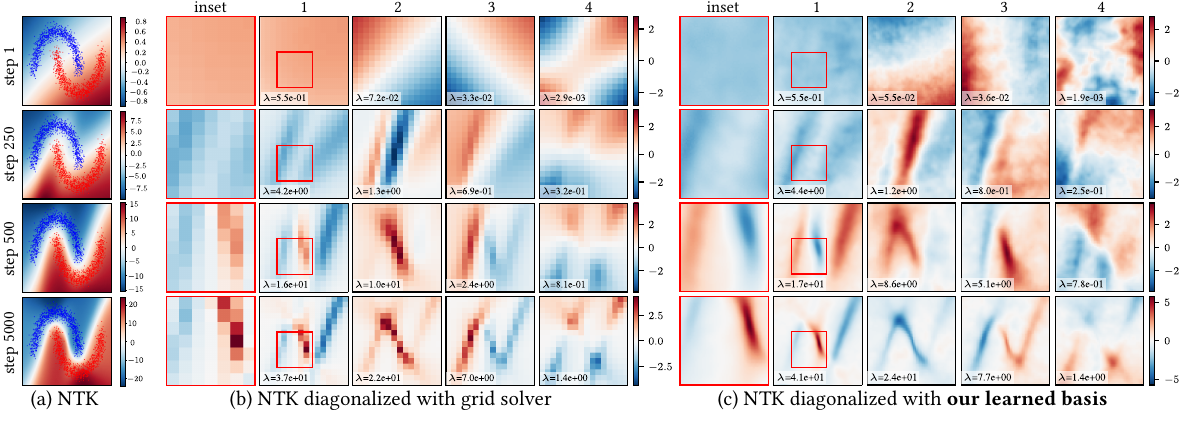}
    \caption{\textbf{Two-moon NTK.} Diagonalizing the NTK over the course of training a two-moon classifier (steps 1, 250, 500, and 5000). \textbf{(a)} The evolving logit landscape and training data. \textbf{(b)} Top eigenfunctions  recovered by a discrete grid-solver; the grid's fixed resolution leads to noticeable pixelation and a loss of high-frequency information. \textbf{(c)} Eigenfunctions obtained using our proposed discretization-free method. Our approach matches the global structure of the leading eigenfunctions while resolving the fine, intricate details near the decision boundary. Note that the eigenspaces are comparable up to sign which explains why the colormaps sometimes appear swapped between the two methods.}
    \label{fig:ntk_diagonalization}
\end{figure}

\textbf{Diagonalizing the neural tangent kernel (NTK).} Moving beyond PCA, we diagonalize the NTK operator at various stages of training a two-moon classifier. A short summary on what diagonalization of the NTK means is provided in \autoref{appx:diagonalization-theory}.
\autoref{fig:ntk_diagonalization} compares against a grid-based eigensolver. While the discrete solver isolates eigenfunctions with lower eigenvalues slightly better, its fixed resolution fails to capture fine spatial details, particularly at the sharp center of the decision boundary. Our discretization-free method successfully resolves these localized features, highlighting a fundamental advantage of solving eigenproblems entirely in function space.

\textbf{Koopman operators.} We train $Q_\theta$ to approximate the Koopman operator, by optimizing the objective in \eqref{eq:koopman-training}. Here, the underlying dynamical system $\Psi$ is the Taylor-Green vortex on a flat torus \cite{taylor1937mechanism}, simulated for a discrete time step of $\Delta t = 0.5s$. Specifically, the discrete flow map $\Psi$ is obtained by integrating the governing ODE $\dot{x} = v(x)$ over $\Delta t$ using a fourth-order Runge-Kutta (RK4) scheme, where $v$ parameterizes the Taylor-Green vortices. After training, we evaluate the model on a sample initial condition (see \autoref{fig:taylor_vortices}) by iteratively applying the learned operator to generate extended fluid-flow trajectory snapshots. We compare the energy conservation of these predicted rollouts against various solvers. Although baselines that run $\Psi$ may produce more visually plausible dynamics, they fail to preserve energy, evaluated as the Hilbert norm of the signal. In contrast, despite being noisier, our learned Koopman operator yields plausible results and strictly conserves energy.

\begin{figure}\captionsetup{font=footnotesize}
    \centering
    \includegraphics[width=\linewidth]{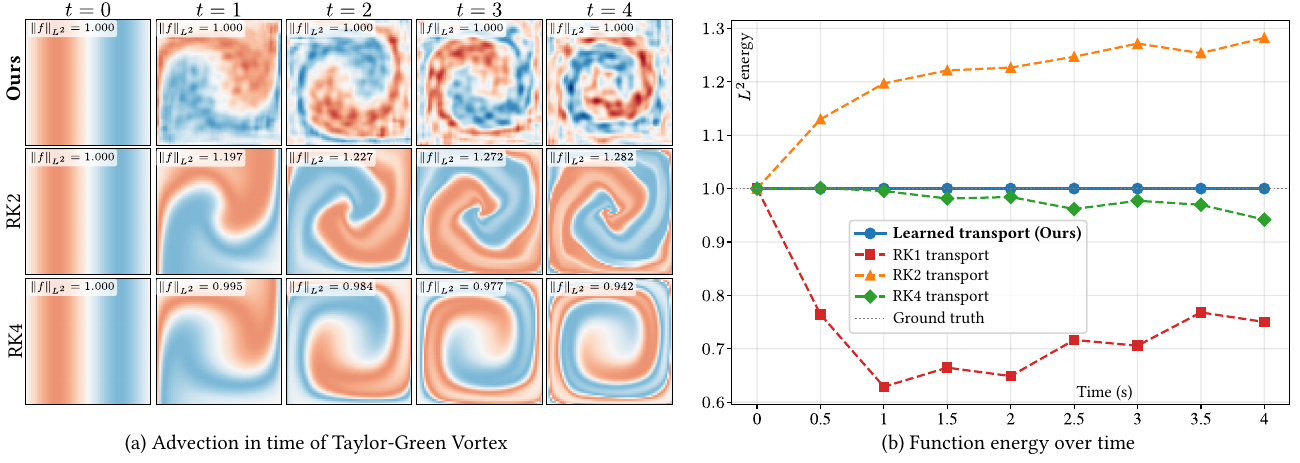}
    \caption{\textbf{Learning the Koopman Operator.} \textbf{(a)} Visual comparison of fluid-flow trajectories generated by our learned Koopman operator vs. the composition operator integrated via RK methods; our method is noisier but conserves energy. \textbf{(b)} $L^2$ energy tracking over time, comparing standard solvers (1\textsuperscript{st}, 2\textsuperscript{nd}, and 4\textsuperscript{th}-order RK) against iterative rollouts from $Q_\theta$.}
    \label{fig:taylor_vortices}
\end{figure}

%% file: neurips_sections/conclusions.tex
\section{Conclusions, Limitations, \& Future Work}

In this work, we introduced a neural framework for learning continuous orthonormal bases in function spaces. Our approach allows us to optimize these bases for various tasks while preserving their essential linear-algebraic properties, such as continuity and orthonormality. The Lie ODE formulation allowed us to prove theoretical guarantees on the expressiveness of these networks. However, optimizing and backpropagating through any ODE trajectory is well-known to be practically challenging \cite{gholami2019anode, finlay2020train}. Reaching our theoretical limits often demands long integration horizons, which can complicate optimization dynamics and increase training times. This motivates future work on tailored architectures and optimizers to scale our method.

Several exciting directions remain for future research. Here, we showed how one can diagonalize the covariance and NTK operators; a natural extension is to diagonalize elliptic operators in physics or the Laplace-Beltrami operator in differential geometry. Additionally, we showed how our change-of-basis operator can match the Koopman operator, which is also an integral component of the more general family of dynamic mode decomposition (DMD) \cite{schmid2010dynamic} methods. It is natural to extend our method in the context of DMD with more complicated physical systems and their Koopman operators.

In conclusion, we hope that parameterizing orthonormal bases as a continuous, optimizable neural network will offer a powerful new lens for modeling complex systems involving function spaces.

%% file: appendix/existence_proof.tex
\section{Proof of Theorem 1} \label{appx:theorem1-proof}

We first restate the theorem for completeness:

\ApproximationTheorem*

Here, we repeatedly use the fact that any rank-$2$ operator $K(t)$ can be written in the following form:
\begin{equation}
    K(t) = \alpha(t) \otimes \beta(t) - \beta(t) \otimes \alpha(t),
\end{equation}
where $\alpha, \beta: [0, T] \to H$ are $H$-valued functions of time.
Our goal is to ultimately show that for any operator $Q \in \mathcal{SO}(HS)$, there exist $\alpha, \beta$ such that the solution to the following initial value problem closely approximates $Q$ at $t=T$:
\begin{equation} \label{eq:cayley-generator}
    \dot{Q}(t) = \left[ \alpha(t) \otimes \beta(t) - \beta(t) \otimes \alpha(t) \right] Q(t), \qquad Q(0) = I.
\end{equation}
To do so, we first restrict our space to the following space of operators:
\begin{definition}
Let $\mathcal{SO}(\text{fin}) \subset \mathcal{SO}(HS)$ denote the group of ``finite-like'' orthogonal Hilbert Schmidt operators that act as identity outside of some finite-dimensional linear subspace of $V$ of $H$. That is, if $Q \in \mathcal{SO}(\text{fin})$ then there exists some finite $n$ and a finite set $S = \{e_1,\ldots,e_n\} \subset H$ such that
\begin{equation}
Q v = v \quad \text{for all } v \in V^\perp  = \text{span}\{e_1,\ldots,e_n\}^\perp.\nonumber
\end{equation}
\end{definition}

We now constructively prove that the approximation theorem holds for $\mathcal{SO}(\text{fin})$:

\begin{lemma}
For any $Q \in \mathcal{SO}(\text{fin})$, there exists continuous maps $\alpha, \beta : [0,T] \times \Omega \to \R$ such that the solution $Q(T)$ of \eqref{eq:cayley-generator} satisfies $Q(T) = Q$.
\end{lemma}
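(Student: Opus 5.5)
The plan is to reduce to finite dimensions and realize the target rotation as a time-ordered product of planar rotations, each switched on in its own time window.

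\textbf{Step 1: reduce to a matrix.} Fix $Q \in \mathcal{SO}(\text{fin})$ with invariant subspace $V = \operatorname{span}\{e_1,\ldots,e_n\}$. After Gram--Schmidt we may assume the $e_i$ are orthonormal. Since $Q$ is orthogonal and fixes $V^\perp$ pointwise, it maps $V^\perp$ onto itself. Hence $Q$ maps $V = (V^\perp)^\perp$ onto itself, and its restriction is represented by a matrix $M \in O(n)$.

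We need $M \in SO(n)$. Membership $Q \in \mathcal{SO}(HS)$ means $Q$ is the endpoint of a continuous path $t\mapsto Q(t)$ of orthogonal operators starting at $I$. If $\det M = -1$, we enlarge $V$ by one extra orthonormal vector $e_{n+1}$ and adjust on that direction, which changes nothing essential. Strictly, this situation does not arise: a flow whose generator is HS (hence compact) yields $Q - I$ compact, and a Fredholm-determinant argument gives $\det = 1$ on $V$. The cleanest fix is to include orientation in the definition of $\mathcal{SO}(\text{fin})$, noting that the flow of a skew generator is connected to $I$.

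\textbf{Step 2: Givens factorization.} Classically (\cite{golub2013matrix}), $M \in SO(n)$ factors as
\[
M = G_N \cdots G_1, \qquad N \le \binom{n}{2},
\]
where each $G_k$ is a rotation by angle $\theta_k$ in the plane $\operatorname{span}\{e_{i_k}, e_{j_k}\}$. Each $G_k$, extended by the identity on the orthogonal complement of that plane, equals $\exp(\theta_k J_k)$ with
\[
J_k = e_{j_k}\otimes e_{i_k} - e_{i_k}\otimes e_{j_k}.
\]
This $J_k$ is skew-adjoint, rank $2$, and Hilbert--Schmidt.

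\textbf{Step 3: concatenate in time.} Split $[0,T]$ into $N$ consecutive intervals $I_k=[t_{k-1},t_k]$. Choose a continuous bump $c_k\ge 0$ supported in $I_k$ and vanishing at its endpoints, with $\int_{I_k} c_k = \theta_k$. Set
\[
\alpha(t) = \textstyle\sum_k c_k(t)\, e_{j_k}, \qquad \beta(t) = \textstyle\sum_k \mathbf 1_{I_k}(t)\, e_{i_k},
\]
or more symmetrically $\beta(t)=\sum_k \sqrt{c_k(t)}\,e_{i_k}$ and $\alpha(t)=\sum_k \sqrt{c_k(t)}\,e_{j_k}$. Only the product $\alpha\otimes\beta$ needs to be continuous in $t$, and on $I_k$ it equals $c_k(t) J_k$, which vanishes at the interval endpoints. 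So $K(t)$ is continuous, rank $\le 2$, and uniformly HS.

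On each $I_k$ the generator is $c_k(t)J_k$. Its values at different times commute, so the flow over $I_k$ is
\[
\exp\!\Big(\Big(\int_{I_k} c_k\Big) J_k\Big) = G_k.
\]
Composing over the intervals gives $Q(T) = G_N\cdots G_1 = M$ on $V$ and the identity on $V^\perp$, that is, $Q(T)=Q$.

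\textbf{Step 4: spatial continuity.} The lemma asks for $\alpha,\beta$ continuous on $[0,T]\times\Omega$, but the $e_i$ are only $L^2$ functions. We handle this by choosing the basis of $V$ to consist of continuous functions where possible. In general it is not possible, and the main obstacle is exactly this point: functions in $H$ need not be continuous.

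My approach would be to read ``continuous'' as continuous in $t$ into $H$; Step 3 already gives this. If pointwise continuity is truly required, I would fall back on density. Approximate each $e_i$ by continuous orthonormal functions $\tilde e_i$: take continuous approximants, which are dense in $L^2$ on compact $\Omega$, and Gram--Schmidt them. Then $\tilde Q$, the operator with matrix $M$ in the $\tilde e$ basis, is exactly reachable, and $\|\tilde Q - Q\|_{HS}$ is small. Under this fallback the lemma holds only approximately, which still suffices for Theorem~\ref{thm:universal-appx}.
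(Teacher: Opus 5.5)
Your Steps 1--3 follow the paper's proof. You Givens-factorize $Q_V$ and switch on one rank-2 planar generator per time window via a bump; your symmetric choice $\sqrt{c_k}$ is the paper's $r_m(t)\sqrt{\theta_m}$. Each self-commuting window then integrates to $\exp(\theta_k J_k)$, and the windows chain to $Q$. The paper simply asserts $\det Q_V=1$, and there is a cleaner reason than a Fredholm determinant: the HS generator $K$ with $Q=\exp K$ commutes with $Q$, so it preserves the finite-dimensional space $\ker(Q+I)\subseteq V$ and is injective and skew there. That kernel is therefore even-dimensional, giving $\det Q_V=(-1)^{\dim\ker(Q+I)}=1$. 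Your ``enlarge $V$ and adjust'' idea should be dropped, since it would change $Q$.

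Your Step~4 caveat is well founded, and the lemma cannot hold in the literal pointwise sense. The paper's construction likewise only yields $\alpha,\beta$ continuous in $t$ with values in $H$. Moreover, when $\alpha,\beta$ are jointly continuous on the compact set $[0,T]\times\Omega$, every $(Q(T)-I)f=\int_0^T K(t)Q(t)f\,dt$ has a continuous representative. Hence a nontrivial rotation of a plane containing no nonzero continuous function (e.g.\ one spanned by two suitable step functions on $[0,1]$) is not exactly reachable, and your approximate fallback is the right way to get Theorem~\ref{thm:universal-appx}.
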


\begin{proof}
Let $Q \in \mathcal{SO}(\text{fin})$. Then, by definition, there exists a finite-dimensional subspace $V$ alongside its orthonormal basis $\{e_1, \ldots, e_n\} \subset H$ such that $Q$ acts trivially on $V^\perp$. 
We can therefore restrict the representation of $Q$ to $V$ and get a finite representation $Q_V \in SO(n)$. By classical results in numerical linear algebra and Lie group theory \cite{hall2013lie}, any element of $SO(n)$ can be written as a finite product of Givens' rotations acting on coordinate planes. 

In other words, for any $Q_V$ there exist $M$ angles $\{\theta_m\}_{m=1}^M$ and pairwise orthogonal directions $\{a_m\}_{m=1}^M, \{b_m\}_{m=1}^M \subset \mathbb{S}^{n-1}$ such that:
\begin{equation}
Q_V = \prod_{m=1}^M R^{\text{Givens}}_{a_m, b_m}(\theta_m) = \prod_{m=1}^M \exp\Big(\theta_m (a_m b_m^\top - b_m a_m^\top)\Big).
\end{equation}
We can now define $\alpha_m = \sqrt{\theta_m} \cdot \sum_{i=1}^n a_{mi} e_i \in H$ and $\beta_m = \sqrt{\theta_m} \cdot \sum_{i=1}^n b_{mi} e_i \in H$ which allows us to lift the representation back from the coefficient space to $H$ and write:
\begin{equation}
Q = \prod_{m=1}^M \exp\Big(\alpha_m \otimes \beta_m - \beta_m \otimes \alpha_m\Big).
\end{equation}
Now, the idea is to construct a continuous field in $H$, that once integrated, results in the operator above. We achieve this by stitching together $M$ time-evolving functions below:
\begin{align}
    \alpha_m(t) = r_m(t) \, \alpha_m, & \qquad \beta_m(t) = r_m(t)\, \beta_m.
\end{align}
We define $r_m: \R \to \R$ as a continuous control function in the form of a ``bump'' that has the following properties: $r_m(0) = r_m(1) = 0$ and $\int_0^1 r_m^2(\tau) d\tau = 1$. One such control function can be the square root of the probability density of the Beta distribution. 

These properties on $r_m(t)$ ensure $\alpha_m(1) = \alpha_{m+1}(0)$ and $\beta_m(1) = \beta_{m+1}(0)$, allowing us to stitch generators continuously and form a bigger generator. 
Furthermore, the construction above allows the self-adjoint operator to always self-commute over all $t \in [0, 1]$:
\begin{align}
    \alpha_m(t) \otimes \beta_m(t) - \beta_m(t) \otimes \alpha_m(t) = r_m^2(t) \left[ \alpha_m \otimes \beta_m - \beta_m \otimes \alpha_m \right]
\end{align}

Next, w.l.o.g. assume $T=M$ and define the following $M$ initial value problems:
\begin{equation}
    \dot{Q}_m = \left[ \alpha_m(t) \otimes \beta_m(t) - \beta_m(t) \otimes \alpha_m(t) \right] Q_m(t), \quad Q_m(0) = \begin{cases}
        Q_{m-1}(1) & \text{if } m > 1\\
        I & \text{otherwise}
    \end{cases}.
\end{equation}
Using the fact that the generator above is skew-adjoint and self-commutes, we can write the solution of the ODE as an exponential map and simplify:
\begin{align}
Q_m(1) &= \exp\left( \left[ \alpha_m \otimes \beta_m - \beta_m \otimes \alpha_m \right] \cdot \underset{(=1) \text{ by construction}}{\underbrace{ \int_0^1 r_m^2(\tau) d\tau}} \cdot\right)   Q_m(0)\\
\Rightarrow~~~ & Q_M(1) = \prod_{m=1}^M \exp\left( \alpha_m \otimes \beta_m - \beta_m \otimes \alpha_m \right) 
\end{align}
Hence, chaining them yields a time-evolving $Q(t)$ for the form \eqref{eq:cayley-generator} where $Q(T)=Q_M(1)=Q$.
\end{proof}

\begin{lemma}
$\mathcal{SO}(\text{fin})$ is dense in $\mathcal{SO}(HS)$ with respect to the Hilbert-Schmidt operator norm.
\end{lemma}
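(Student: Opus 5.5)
The plan is to truncate the skew-adjoint generator to a finite-dimensional subspace and show that exponentiation is $1$-Lipschitz in the Hilbert--Schmidt norm. Let $Q \in \mathcal{SO}(HS)$. By definition $Q$ solves \eqref{eq:exponential-map-basic} with a constant skew-adjoint Hilbert--Schmidt generator $K$, so $Q = \exp(tK)$ for some $t$. Absorbing $t$ into $K$, we may write $Q = \exp(K)$ with $K^\ast = -K$ and $\|K\|_{\text{HS}} < \infty$. The exponential series converges in operator norm because $K$ is bounded.

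\textbf{Step 1 (truncation).} Fix the reference ONB $\{\varphi_i\}$ and let $P_n$ be the orthogonal projection onto $V_n = \operatorname{span}\{\varphi_1,\ldots,\varphi_n\}$. Set $K_n \coloneqq P_n K P_n$.
\begin{itemize}
\item $K_n$ is skew-adjoint, since $P_n$ is self-adjoint.
\item $K_n$ has finite rank and vanishes on $V_n^\perp$.
\item $K_n$ maps into $V_n$.
\end{itemize}
Consequently $\exp(K_n) = I$ on $V_n^\perp$, and $\exp(K_n)$ is orthogonal and generated by a Hilbert--Schmidt skew-adjoint operator. Hence $\exp(K_n) \in \mathcal{SO}(\text{fin})$. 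Next, $\|K - K_n\|_{\text{HS}} \to 0$. Indeed, in the basis $\{\varphi_i\}$ we have $\|K - P_nKP_n\|_{\text{HS}}^2 = \sum_{(i,j) \notin [n]^2} |\langle \varphi_i, K\varphi_j\rangle|^2$. This is a tail of the convergent series $\sum_{i,j}|\langle\varphi_i,K\varphi_j\rangle|^2 = \|K\|_{\text{HS}}^2$, so it tends to zero.

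\textbf{Step 2 (Lipschitz bound for the exponential).} For bounded $A, B$ we use the Duhamel identity
\begin{equation*}
e^{A} - e^{B} = \int_0^1 \frac{d}{ds}\Big(e^{sA} e^{(1-s)B}\Big)\, ds = \int_0^1 e^{sA}(A - B)\, e^{(1-s)B}\, ds,
\end{equation*}
which holds because $s \mapsto e^{sA}e^{(1-s)B}$ is norm-differentiable. Take $A = K$ and $B = K_n$, both skew-adjoint, so every factor $e^{sA}$ and $e^{(1-s)B}$ is orthogonal. The Hilbert--Schmidt class is an ideal with $\|UXW\|_{\text{HS}} = \|X\|_{\text{HS}}$ for orthogonal $U, W$. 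Applying the triangle inequality for the Bochner integral in the Hilbert space of HS operators then gives
\begin{equation*}
\|\exp(K) - \exp(K_n)\|_{\text{HS}} \le \|K - K_n\|_{\text{HS}} \to 0.
\end{equation*}
This proves density.

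\textbf{Main obstacle.} The delicate step is making Step 2 rigorous in infinite dimensions. One must check that the integrand is continuous into the HS class, so that the integral there makes sense and the norm bound passes inside it. This holds because $s \mapsto e^{sA}$ is norm-continuous and $A - B$ is a fixed HS operator.

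If this is troublesome, a fallback avoids the integral by using the spectral theorem for the compact skew-adjoint operator $K$. It decomposes $H$ into $\ker K$ plus orthogonal planes $\operatorname{span}\{u_j, v_j\}$ on which $K$ acts as $\lambda_j(u_j\otimes v_j - v_j\otimes u_j)$, with $\sum_j \lambda_j^2 < \infty$. Truncating to the first $n$ planes gives an element of $\mathcal{SO}(\text{fin})$ whose difference from $Q$ is block-diagonal with $2\times 2$ blocks $R(\lambda_j) - I$ for $j > n$. Therefore
\begin{equation*}
\|Q - \exp(K_n)\|_{\text{HS}}^2 = \sum_{j>n} 2\,|e^{i\lambda_j} - 1|^2 \le 2\sum_{j>n}\lambda_j^2 \to 0.
\end{equation*}
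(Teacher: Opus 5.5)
Your proof is correct, and your main route differs from the paper's. The paper truncates the spectral decomposition $K=\sum_j \lambda_j(v_j\otimes u_j - u_j\otimes v_j)$ so that $K_n$ commutes with $K$. It then writes $\exp(K_n)-\exp(K)=\exp(K)[\exp(K_n-K)-I]$ and concludes by continuity of the exponential. You instead compress $K$ by the coordinate projections $P_n$ of the reference basis, which in general do not commute with $K$. You compensate with the Duhamel identity and unitary invariance of the Hilbert--Schmidt norm. This yields the global estimate $\|e^{K}-e^{K'}\|_{\text{HS}}\le\|K-K'\|_{\text{HS}}$ for any pair of skew-adjoint operators whose difference is Hilbert--Schmidt. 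Your approach buys three things: you need no spectral theorem, your approximants act nontrivially only on $\operatorname{span}\{\varphi_1,\ldots,\varphi_n\}$ (the reference basis used throughout the paper), and you get an explicit rate.

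It is also more careful than the paper's written argument. The paper bounds the difference by $\|\exp(K)\|_{\text{HS}}\cdot\|\exp(K_n-K)-I\|_{\text{HS}}$, but $\|\exp(K)\|_{\text{HS}}=\infty$ for any orthogonal operator on an infinite-dimensional $H$. That step only goes through after replacing this factor with $\|\exp(K)\|_{\mathrm{op}}=1$ and using $\|\exp(X)-I\|_{\text{HS}}\le e^{\|X\|_{\text{HS}}}-1$. Your use of $\|UXW\|_{\text{HS}}=\|X\|_{\text{HS}}$ for orthogonal $U,W$ handles exactly this point.

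Your fallback is the paper's spectral truncation. There, the exact block computation $\|Q-\exp(K_n)\|_{\text{HS}}^2=\sum_{j>n}2|e^{i\lambda_j}-1|^2\le 2\sum_{j>n}\lambda_j^2$ replaces the commutation-and-continuity step, and so again avoids the problematic factor.
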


\begin{proof}
Let $Q \in \mathcal{SO}(HS)$, then from the definition, it can be written as the exponential map of a Hilbert-Schmidt skew-adjoint operator, i.e., $Q = \exp(K)$ for some $K = -K^\ast$ where $\|K\|_{HS} < \infty$.

Since $K$ is Hilbert-Schmidt, using the spectral theorem for operators, there exists a sequence of real values $\lambda_n > 0$ and an orthonormal set of vectors in $H$, $\{ u_n, v_n\}_{n=1}^\infty$, such that:
\begin{equation}
    K = \sum_{j=1}^\infty \lambda_j (v_j \otimes u_j - u_j \otimes v_j).
\end{equation}
Next, we can define the sequence of finite rank operators $\{K_n\}_{n=1}^\infty$ as the spectral truncation:
\begin{equation}
    K_n = \sum_{j=1}^n \lambda_j (v_j \otimes u_j - u_j \otimes v_j).
\end{equation}
Importantly, $\lim_{n \to \infty} K_n = K$ in $\| \cdot \|_{HS}$ and $K$ commutes with all $K_n$: $K K_n = K_n K$. 

Since $K_n$ is finite rank, its exponential acts as identity on $\text{range}(K_n)^{\perp}$; this means that $Q_n := \exp(K_n) \in \mathcal{SO}(\text{fin})$. Furthermore, we have that:
\begin{align}
    & \lim_{n \to \infty} \| Q_n - Q\|_{HS} \\
    =~ &  \lim_{n\to\infty} \| \exp(K_n) - \exp(K)\|_{HS} \\
    =~ & \lim_{n\to\infty} \| \exp(K) \left[ \exp(K_n - K) - I \right]\|_{HS} \qquad \text{\small Exponentiation of Commutative Operators}\\
    \le ~ &  \| \exp(K) \|_{HS} \cdot \| \exp\left(\lim_{n \to \infty}K_n - K \right)- I\|_{HS} \qquad \text{\small Continuity of norms and exponentials}\\
    = ~ & \| \exp(K) \|_{HS} \cdot \| \exp(\mathbf{0}) - I \|_{HS} = 0.
\end{align}
Thus, we have found for any $Q \in \mathcal{SO}(HS)$, a sequence of operators $Q_n \in \mathcal{SO}(\text{fin})$ such that $Q_n \overset{n\to \infty}\Longrightarrow Q$ in $\| \cdot \|_{HS}$; this is precisely the definition of $\mathcal{SO}(\text{fin})$ being dense in $\mathcal{SO}(HS)$.
\end{proof}

Collectively, the two lemmas above prove \autoref{thm:universal-appx}.

Finally, a universal approximation result with MLPs is straightforward: with a sufficiently overparameterized $\theta$, any two MLPs can approximate both functions $\alpha$ and $\beta$ on the compact domain $[0, T] \times \Omega$ \cite{hornik1989multilayer}. In turn, by approximating the continuous $\alpha: [0, T] \times \Omega \to \R$ and $\beta: [0, T] \times \Omega \to \R$, these MLPs implicitly approximate an arbitrary $Q \in \mathcal{SO}(HS)$.

%% file: appendix/diagonalization-theory.tex
\section{Diagonalizing Linear Operators} \label{appx:diagonalization-theory}

\subsection{Proof of Theorem 2}

\DiagonalizationTheorem*
\begin{proof}
Let $\Upsilon$ and $\Lambda$ be diagonal operators with entries $p(i)$ and $\lambda_i$, respectively, and let $\Phi \in \mathcal{SO}(\infty)$ be the orthogonal operator whose $i$-th column corresponds to $Q \varphi_i$. We can compactly rewrite the objective as $\operatorname{tr}(\Upsilon \Phi^\ast A \Phi)$. Let $\sigma_i(\cdot)$ an operator functional that inputs an operator and returns its $i$th largest eigenvalue.  Since both $\Upsilon$ and $A$ are trace-class, by applying von Neumann's trace inequality and the Poincar\'e separation theorem, respectively, we upper-bound the objective as follows: 
\begin{align}
    \operatorname{tr}(\Upsilon \Phi^\ast A \Phi) & \le \sum_{i=1}^\infty p(i) \sigma_i(\Phi^\ast A \Phi) \qquad \text{Von-Neumann inequality for trace-class operators}\\
    & \le \sum_{i=1}^\infty p(i) \sigma_i(A) \qquad \Phi \text{ is unitary, so: } \sigma_i(\Phi^\ast A \Phi) \le \sigma_i(A)\\
    & = \sum_{i = 1}^\infty p(i) \lambda_i.
\end{align}
The upper bound is exactly achieved when $\Phi \in \mathcal{SO}(HS)$ diagonalizes $A$.
\end{proof}

\subsection{Diagonalizing the Covariance Operator}

An important detail omitted from the main text is that, in practice, many datasets $X \sim \mathbb{P}$ used for principal component analysis are not necessarily zero-mean. To properly diagonalize the true covariance operator $\mathcal{C}$ using $Q_\theta$, we must shift the data by its empirical mean. 
Rather than computing the empirical mean beforehand, we accomplish the same effect by concurrently fitting a spatial mean function—parameterized as a neural field $\nu_{\psi}: \Omega \to \mathbb{R}$ using random Fourier features \cite{tancik2020fourier}. 

This leads to a joint objective where we maximize the captured energy of the centered data while minimizing the $L^2$ reconstruction error of the mean:
\begin{align}
\max_{\theta} ~~\;& \E_{i \sim p, X \sim \mathbb{P}} \langle X - \nu_\psi, Q_\theta \varphi_i\rangle^2, \\
\min_{\psi} ~~\;& \E_{X \sim \mathbb{P}}\|\nu_\psi - X\|^2.
\end{align}
Because the parameters $\psi$ and $\theta$ are decoupled across these two objectives, we can optimize them in parallel using independent optimizers for each task.

For clarity, the algorithm is summarized as a pseudo-code in \autoref{alg:pca}.

\begin{algorithm}[ht]
\caption{Continuous Principal Component Analysis}
\label{alg:pca}
\KwIn{Dataset $\mathcal{X}$, reference basis $\{\varphi_i\}_{i=1}$, distribution $p(i)$, domain $\Omega$ with measure $\mu$.}
\KwOut{Optimized basis $Q_\theta$ and mean function $\nu_\psi$.}
Initialize orthogonal network parameters $\theta$ and mean function parameters $\psi$\;
\While{not converged}{
    Sample a function $X \sim \mathcal{X}$, a basis index $i \sim p$, and quadrature points $\hat{\Omega} \sim \mu$\;
    
    $\mathcal{L}_{\mathrm{mean}} \leftarrow \frac{1}{|\hat{\Omega}|} \sum_{\omega \in \hat{\Omega}} \|\nu_\psi(\omega) - X(\omega)\|^2$\;
    $\mathcal{J}_{\mathrm{PCA}} \leftarrow \frac{1}{|\hat{\Omega}|} \sum_{\omega \in \hat{\Omega}} \langle X(\omega) - \nu_\psi(\omega), (Q_\theta \varphi_i)(\omega) \rangle^2$\;
    
    $\psi \leftarrow \text{update}(\psi,  \nabla_\xi \mathcal{L}_{\mathrm{mean}})$\;
    $\theta \leftarrow \text{update}(\theta, \nabla_\theta \mathcal{J}_{\mathrm{PCA}})$\;
}
\end{algorithm}

\subsection{Diagonalizing the Neural Tangent Kernel}\label{appx:ntk-theory}

Let $\text{MLP}(\cdot ;  \xi) : \Omega \to \R$ a non-linear neural function parameterized by $\xi$. We restrict our attention to networks with a scalar output; this encompasses the experimental setting in \autoref{sec:experiments} like our Two Moons experiment where the output of the network represents the logit of a binary classifier.

The Neural Tangent Kernel (NTK) is defined by the inner product of the network's parameter gradients. We can express this as an integral operator $\kappa_\xi$ acting on the function space $H$ with the kernel $\kappa_\xi[x, y] = \nabla_\xi \text{MLP}(x; \xi)^\top  \nabla_\xi \text{MLP}(y; \xi)$. Specifically, the action of the NTK operator on any function $f \in H$ is defined as:
\begin{equation}
    (\kappa_\xi f)(x) = \int_\Omega \nabla_\xi \text{MLP}(x; \xi)^\top  \nabla_\xi \text{MLP}(y; \xi) f(y) \, \mu (dy).
\end{equation}

To diagonalize the NTK using our framework, we substitute $\kappa_\xi$ for $A$ in the variational objective of \autoref{thm:diagonalization}. For a basis element $\phi_i = Q\varphi_i$, we can simplify the quadratic form $\langle \phi_i, \kappa_\xi \phi_i \rangle$:
\begin{align}
    & \langle \phi_i, \kappa_\xi \phi_i \rangle \\
    = ~~& \int_\Omega \phi_i(x) \left( \int_\Omega \nabla_\xi \text{MLP}(x; \xi)^\top  \nabla_\xi \text{MLP}(y; \xi) \phi_i(y) \, \mu (d y) \right) \mu(dx)\\
    = ~~& \left(\int_{\Omega} \left[\phi_i(x) \cdot  \nabla_\xi \text{MLP}(x; \xi)\right] \mu(dx)\right)^\top  \left(\int_{\Omega} \left[ \nabla_\xi \text{MLP}(x; \xi) \cdot \phi_i(x)\right] \mu(dx)\right) \quad \text{(Fubini)}\\
    = ~~& \| \E_{x \sim\mu} \left[ \phi_i(x) \nabla_\xi \text{MLP}(x; \xi)\right] \|_2^2.
\end{align}
Thus, the quadratic form is in fact equal to the squared norm of the expected parameter gradient. Substituting this simplified quadratic form back into the general objective, we arrive at our specialized objective function for diagonalizing the NTK:
\begin{equation}
    \max_{\theta} ~~\E_{i \sim p} \left\| \E_{x \sim \mu} \left[ (Q_\theta\varphi_i)(x) \nabla_\xi \text{MLP}(x; \xi) \right] \right\|^2_2.
\end{equation}

In practice, the optimization proceeds by sampling a target index $i \sim p$ and a batch of quadrature points $\hat{\Omega} \sim \mu$. At these points, we evaluate both our trial basis function $(Q_\theta \varphi_i)(x)$ and the per-sample parameter gradients $\nabla_\xi \text{MLP}(x; \xi)$. We then compute the empirical expectation over the batch by taking the weighted average of these evaluated gradients, yielding a single vector in the network's parameter space. Finally, we compute the squared norm of this expectation and backpropagate to update the basis parameters $\theta$, keeping the MLP parameters $\xi$ frozen. Upon convergence, the learned functions $\phi_i = Q_\theta \varphi_i$ faithfully represent the dominant eigenmodes of the training dynamics.

For clarity, the algorithm is summarized as a pseudo-code in \autoref{alg:ntk}.

\begin{algorithm}[H]
\caption{Diagonalizing the Neural Tangent Kernel}
\label{alg:ntk}
\KwIn{Domain $\Omega$ with measure $\mu$, reference basis $\{\varphi_i\}_{i=1}$, distribution $p(i)$, frozen $\text{MLP}(\cdot; \xi)$.}
\KwOut{Optimized NTK eigenfunctions $\phi_i = Q_\theta \varphi_i$.}
Initialize orthogonal network parameters $\theta$\;
\While{not converged}{
    Sample a batch of quadrature points $\hat{\Omega} \sim \mu$ \textbf{and} a basis index $i \sim p$\;
    
    \tcp{Step 1: Evaluate trial eigenfunctions and parameter gradients}
    \For{each $\omega \in \hat{\Omega}$}{
        $g(\omega) \leftarrow \nabla_\xi \mathrm{MLP}(\omega; \xi)$\;
        $q(\omega) \leftarrow (Q_\theta \varphi_i)(\omega)$\;
    }
    
    \tcp{Step 2: Compute empirical expected gradient and NTK objective}
    $v \leftarrow \frac{1}{|\hat{\Omega}|} \sum_{\omega \in \hat{\Omega}} q(\omega) g(\omega)$\;
    $\mathcal{J}_{\mathrm{NTK}} \leftarrow \|v\|_2^2$\;
    
    \tcp{Step 3: Update basis generator}
    $\theta \leftarrow \text{update}(\theta, \nabla_\theta \mathcal{J}_{\mathrm{NTK}})$\;
}
\end{algorithm}

%% file: appendix/lie-integration.tex
\section{Cayley Integration Details and Multichannel Details}
\label{appx:lie-integration}

This appendix expands on \autoref{sec:implementation_details}. We provide a pseudo-code for the Cayley integration in \autoref{alg:change-of-basis-cayley}; this method is internally implemented whenever $Q_\theta \varphi_i(\omega)$ is called for $\omega \in \hat{\Omega} \subset \Omega$. Notably, we set $T = 1$ for all experiments.

We also extend the Cayley integration to multichannel function spaces and  provide more details on the forward and backward Euler ODE solvers used in our calculations in \autoref{fig:low_rank_function_space_rotation}.

\begin{algorithm}[H]
\caption{Continuous Change-of-Basis via Cayley Integration}
\label{alg:change-of-basis-cayley}
\KwIn{Initial basis function $\varphi_i$, parameters $\theta$, rank $r$.}
\KwOut{Estimated transformed basis evaluation $\hat{\phi} \approx Q_\theta \varphi_i$.}
Sample random time discretization $0 = t_0 < t_1 < \dots < t_L = T$\;
Sample random dense spatial samples $\hat{\Omega} = \{\omega_j\}_{j=1}^D \sim \mu$\;
Initialize $\hat{\phi} \leftarrow \varphi_i(\hat{\Omega})$\;
\For{$\ell = 1, \dots, L$}{
    $\Delta t_\ell \leftarrow t_\ell - t_{\ell-1}$ \textbf{and} $t_{\mathrm{mid}} \leftarrow t_{\ell-1} + \Delta t_\ell / 2$\;

    $\hat{U} \leftarrow \sqrt{\Delta t_\ell}\, U_\theta(t_{\mathrm{mid}}, \hat{\Omega}) \in \mathbb{R}^{r \times D}$ \textbf{and} $\hat{S} \leftarrow M_\theta(t_{\mathrm{mid}}) - M_\theta(t_{\mathrm{mid}})^\top \in \mathbb{R}^{r \times r}$\;

    \tcp{Step 1: Estimate $r \times r$ Gram matrix and Woodbury inverse $M$}
    $G \leftarrow [\hat{U} \hat{U}^\top]_{i j} \approx \frac{1}{D} \sum_{\omega \in \hat{\Omega}} \hat{U}^i(\omega) \hat{U}^j(\omega) \in \mathbb{R}^{r \times r}$ \textbf{and} $M \leftarrow \left(I_{r \times r} - \tfrac{1}{2} G\, \hat{S}\right)^{-1}$\;

    \tcp{Step 2: Apply Forward Action $\hat{\phi} \mapsto \hat{\phi} + \tfrac{1}{2}\hat{U}^\top \hat{S} \hat{U} \hat{\phi}$}
    $c \leftarrow \hat{U} \hat{\phi} \approx \frac{1}{D} \sum_{\omega \in \hat{\Omega}} \hat{U}(\omega) \hat{\phi}(\omega) \in \mathbb{R}^{r}$\;
    $\hat{y} \leftarrow \hat{\phi} + \tfrac{1}{2}\, \hat{U}^\top\, \hat{S}\, c$\;

    \tcp{Step 3: Apply Backward Action via Woodbury}
    $\tilde{c} \leftarrow \hat{U} \hat{y} \approx \frac{1}{D} \sum_{\omega \in \hat{\Omega}} \hat{U}(\omega) \hat{y}(\omega) \in \mathbb{R}^{r}$\;
    $w \leftarrow \hat{S} M\, \tilde{c} \in \mathbb{R}^{r}$\;
    $\hat{\phi} \leftarrow \hat{y} + \tfrac{1}{2}\, \hat{U}^\top\, w$\;
}
\end{algorithm}

\subsection{Multichannel Hilbert Spaces}

The multichannel-valued Hilbert space $H_C=L^{2}(\Omega,\mathbb{R}^{C})$, is defined through the following inner product that integrates over the spatial domain and sums across channels:
\begin{equation}
\langle f,g\rangle = \sum_{c=1}^{C}\int_{\Omega}f_{c}(x)g_{c}(x)\,d\mu(x).
\end{equation}
Consequently, $H_C$ is a single unified Hilbert space rather than $C$ independent ones. Naturally, when $C=1$, $H_C$ simplifies to the standard real-valued Hilbert space denoted as $H$ in the main text.

For experiments involving multichannel data (e.g.\ RGB images in CelebA), we only need a minor adjustment to $U_\theta$. Instead of the single-channel formulation $U_\theta(t, \omega) \in \mathbb{R}^r$, we use $U_{\theta}(t,\omega) \in \mathbb{R}^{r C}$ and index by $U_\theta(t, \omega)^i_c$ to denote the $c$\textsuperscript{th} channel of the $i$\textsuperscript{th} row of $U_\theta$, or in short $\hat{U}^i_c$ if it is evaluated at some time and quadrature points $\hat{\Omega}$. Inner product approximations are updated to account for the channel dimension, meaning the Gram matrix and matrix-vector products become:
\begin{align}
    \left[ \hat{U} \hat{U}^\top\right]_{ij} &\approx \frac{1}{|\hat{\Omega}|} \sum_{\substack{\omega \in \hat{\Omega}\\ 1 \le c \le C}} \hat{U}(\omega)^i_c \cdot  \hat{U}(\omega)^j_c, \\
    \left[\hat{U} \hat{f}\right]_i &\approx \frac{1}{|\hat{\Omega}|} \sum_{\substack{\omega \in \hat{\Omega}\\ 1 \le c \le C}} \hat{U}(\omega)^i_c f_c(\omega).
\end{align}

If, instead, we applied generators independently per channel, we would have forced a block-diagonal structure that isolated the color channels. This would have prevented us from transferring energy between channels, instead, by operating on the unified multichannel Hilbert space $H_C$, our formulation allows mixing between channels. Empirically, this is what allows us to map a blue Fourier basis element to a multicolored function, as shown in \autoref{fig:low_rank_function_space_rotation}. Moreover, since $H_C$ itself is a proper Hilbert space, the approximation \autoref{thm:universal-appx} also holds in the multichannel case; our entire proof transfers by replacing the definition of the inner product.

\begin{figure}[t]\captionsetup{font=footnotesize}
    \centering
    \includegraphics[width=\linewidth]{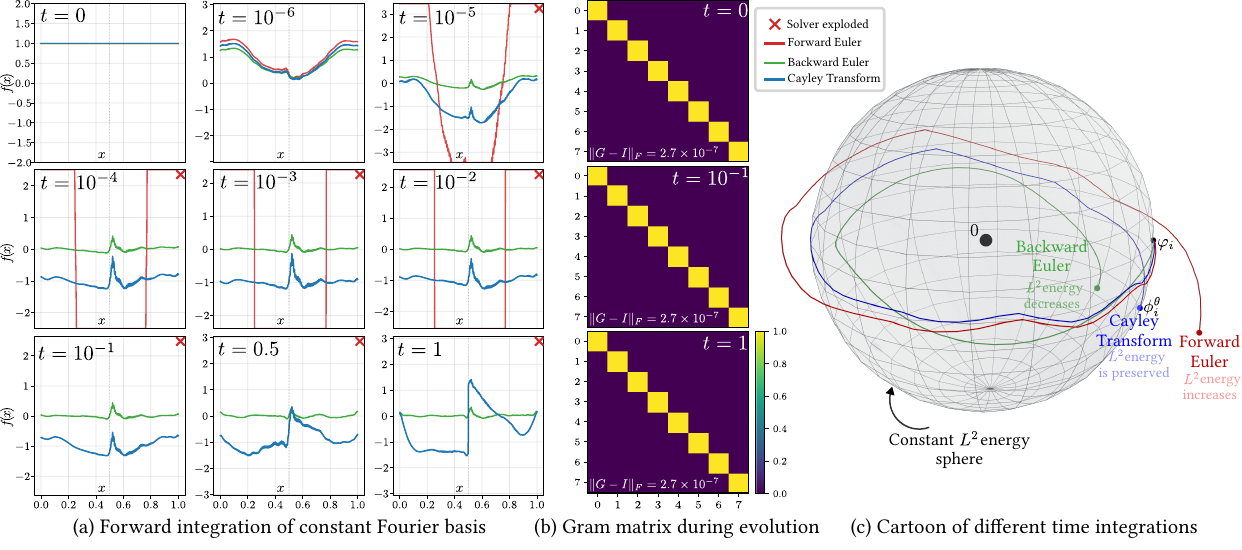}
    \caption{\textbf{Full Cayley Ablation.} Comparison of the Cayley method against backward and forward Euler. \textbf{(a)} Example evolution of a basis function $\phi_i^\theta(t)$ over time. The Cayley solver perfectly maintains signal energy, while forward Euler rapidly explodes and backward Euler dissipates. \textbf{(b)} Beyond norm preservation, the inner products of various basis functions, visualized as a gram matrix, also remain preserved through various snapshots of time. \textbf{(c)} A conceptual diagram of the Lie group integrator, which strictly remains on the orthogonal manifold, compared to methods that either overshoot or decay. Overall, Cayley integration is critical for stable optimization.}
    \label{fig:cayley_ablation}
\end{figure}

\subsection{Cayley vs.\ Off-the-Shelf ODE Solvers}
\label{appx:cayley-ablation}

\autoref{fig:cayley_ablation} compares our Cayley integrator with two classical alternatives, holding the number of steps $L$ and the skew-symmetric operator $\hat{K}_{\ell}$ constant:
\begin{itemize}[leftmargin=*]
\item \textbf{Forward (explicit) Euler}: The update formula can be written as $\hat{\phi}_\ell=(I+\Delta t\,\hat{K}_{\ell})\hat{\phi}_{\ell-1}$.
For a skew-symmetric $\hat{K}_\ell$, this is unstable as each step amplifies the norm by $\sqrt{1+(\Delta t\,\sigma_{K})^{2}} > 1$, where $\sigma_K$ represents the singular values of $\hat{K}_\ell$. This leads to the explosive growth shown in  \autoref{fig:low_rank_function_space_rotation} and \autoref{fig:cayley_ablation}.

\item \textbf{Backward (implicit) Euler}: 
The update rule can be written as $\hat{\phi}_{\ell}=(I-\Delta t\,\hat{K}_{\ell})^{-1}\hat{\phi}_{\ell-1}$.
This approach is stable but dissipative, with the norm decaying by $1/\sqrt{1+(\Delta t\,\sigma_{K})^{2}} < 1$ at each step. This leads to the dissipative performance shown in \autoref{fig:low_rank_function_space_rotation} and \autoref{fig:cayley_ablation}. For a fair comparison, we implement the inverse using the same Woodbury identity as our Cayley solver.

\item \textbf{Cayley (ours)}: Norm-preserving, ensuring $\|\hat{\phi}_\ell\|=1$ at every integration step.
\end{itemize}

%% file: appendix/experimental-details.tex
\section{Experimental Details}
\label{appx:experiment-details}

This appendix collects the configurations used for every experiment in
\autoref{sec:experiments}.  All runs share the same
Cayley integrator (\autoref{appx:lie-integration}) and differ only in their variational objective (PCA, NTK diagonalisation, or Koopman
fitting).
\autoref{tab:experiment-hparams} also summarized some key hyperparameters that are different across the different experiments.

\subsection{Indexing the Infinite Bases and their Prior $p(i)$}
\label{appx:index-prior}

All variational problems in \autoref{sec:variational} involve a real Fourier basis $\varphi_{i}$ defined on $\Omega = [0,1]^{d}$ and an index prior $p(i)$. While Fourier bases are enumerated by integers, for higher-dimensional domains, they are often enumerated with vector multi-indices. When $d > 1$, the index $i$ is represented as a vector of size $d$, $\mathbf{i} = [i_1, i_2, \ldots, i_d]$. For multichannel cases, the index is additionally extended to size $d+1$, $\mathbf{i} = [i_1, i_2, \ldots, i_d, c]$, for multi-channel tasks. For each spatial dimension $j$, the absolute value $|i_j|$ denotes the Fourier frequency. The sign of $i_j$ determines the trigonometric function: $i_j > 0$ yields a cosine, $i_j < 0$ yields a sine, and $i_j = 0$ yields the constant function. 

For example, in a 2D single-channel setting, the index $i = [2, -1]$ corresponds to the scalar basis function $\varphi_i(x_1, x_2) \propto \cos(2 \times 2\pi x_1) \sin(1 \times 2\pi x_2)$. In a multi-channel setting (e.g., a 3-channel RGB image), the extended index $i = [2, -1, 3]$ applies this exact same spatial feature strictly to the third channel. This yields the vector-valued basis function $\varphi_i(x_1, x_2) \propto \begin{bmatrix}
    0, & 0, & \cos(2 \times 2\pi x_1) \sin(1 \times 2\pi x_2)
\end{bmatrix}$.

We quantify the overall spatial frequency of a Fourier basis element using the squared $\ell^2$-norm of its spatial multi-index $\mathbf{i} = [i_1, \dots, i_d]$, given by $\|\mathbf{i}\|_2^2 = \sum_{j=1}^d i_j^2$. To prioritize lower-frequency features, we define the index prior $p$ as a strictly decreasing power law over this norm:
\begin{equation}\label{eq:isotropic-power-law}
p(i_1, \ldots, i_d, c) = \frac{1}{C\,Z_{\alpha}}\bigl(1+\|\mathbf{i}\|_{2}^{2}\bigr)^{-\alpha}, \qquad Z_{\alpha} \coloneqq \sum_{k \in \mathbb{Z}^{d}}\bigl(1+\|\mathbf{i}\|_{2}^{2}\bigr)^{-\alpha},
\end{equation}
where $C$ is the total channel count and $\alpha > d/2$ ensures the partition function $Z_{\alpha}$ is finite. Because \autoref{thm:diagonalization} requires a strict total ordering, any ties between distinct basis elements with identical frequency norms (e.g., $\mathbf{i}=(2,1)$ and $\mathbf{i}=(1,2)$) are resolved by applying a deterministic tiebreaker.

\paragraph{Objective Variance Reduction.}
Na\"ive Monte-Carlo estimation of variational objectives involving the expectation $\mathbb{E}_{\mathbf{i}\sim p}[\cdot]$ can suffer from high variance. To mitigate this, we use a \emph{stratified-with-tail} estimator that decomposes the expectation into two parts: an \emph{exact stratum} $S = \{\mathbf{i} : p(\mathbf{i}) \ge \tau\}$ comprising all high-probability indices, and a \emph{Monte-Carlo tail} consisting of $N_\tau$ independent and identically distributed (i.i.d.) draws from $p$, explicitly rejecting any indices already present in the exact stratum. Concretely, for an integrand $f$, the estimator is
\begin{equation}
    \hat{\mathbb{E}}[f] = \underbrace{\sum_{\mathbf{i} \in S} p(\mathbf{i})\, f(\mathbf{i})}_{\text{exact stratum}} \;+\; \underbrace{\frac{1}{N_\tau} \sum_{k=1}^{N_\tau} f(\mathbf{i}^{(k)})\, \mathbf{1}[\mathbf{i}^{(k)} \notin S]}_{\text{Monte-Carlo tail}}, \qquad \mathbf{i}^{(k)} \stackrel{\text{i.i.d.}}{\sim} p,
\end{equation}
where $\mathbf{1}(\cdot)$ is the indicator function of an event.
By construction, this estimator is unbiased: the exact stratum is evaluated with zero variance, while the variance of the sampled tail scales as $\mathcal{O}(1/N_\tau)$. The probability threshold $\tau$ and the tail sample size $N_\tau$ for each experiment are detailed in \autoref{tab:experiment-hparams}.

\begin{table}[h]\centering\footnotesize
\caption{Hyperparameters for every experiment shown in
\autoref{sec:experiments}.  Rank $r$, time-discretisation $L$, and
quadrature size $D$ are the parameters that control accuracy of the
function-space ODE; $\tau$ is the tail cutoff and $N_\tau$ the tail size for the stratified
energy estimator.}
\label{tab:experiment-hparams}
\begin{tabular}{lccccc}
\toprule
Experiment & $r$ & $L$ & $D$ & $\tau$ & $N_\tau$ \\
\midrule
1-D PCA (\autoref{fig:1D_fPCA})
& 30 & 20 & 64 & $10^{-3}$ & 16 \\
INR-Zoo MNIST (\autoref{fig:fPCA_INR_zoo_and_CelebA})
& 20 & 20 & $64^{2}$ & $10^{-4}$ & 64 \\
CelebA (\autoref{fig:fPCA_INR_zoo_and_CelebA})
& 50 & 20 & $64^{2}$ & $10^{-5}$ & 32 \\
NTK two-moons (\autoref{fig:ntk_diagonalization})
& 10 & 20 & $32^{2}$ & $7\!\times\!10^{-4}$ & 32 \\
Fluid Vortices (\autoref{fig:taylor_vortices})
& 10 & 50 & $32^{2}$ & $10^{-4}$ & 32 \\
\bottomrule
\end{tabular}
\end{table}

\subsection{Architecture}

All of our experiments parameterize the finite-rank basis generator using two neural networks: $M_\theta(t)$, which controls the internal mixing of the basis vectors, and $U_\theta(t, \omega)$, which projects the continuous spatial domain into the rank-$r$ subspace. Both networks are conditioned on time via a shared sinusoidal positional embedding $\gamma(t) \in \mathbb{R}^{512}$. This embedding consists of $F=256$ frequencies log-spaced in $[2^0, 2^3]$:
\begin{equation}
    \gamma(t) = \left[ \sin(2\pi\omega_1 t), \cos(2\pi\omega_1 t), \dots, \sin(2\pi\omega_F t), \cos(2\pi\omega_F t) \right], \quad \omega_f = 2^{3(f-1)/(F-1)}.
\end{equation}

\paragraph{Internal Mixing Matrix $M_\theta(t)$.} 
The network $M_\theta: \mathbb{R}^{512} \to \mathbb{R}^{r \times r}$ maps the time embedding $\gamma(t)$ to the unconstrained mixing matrix. We use a small two-hidden-layer MLP with hidden widths $[64, 64]$, SiLU activations \cite{elfwing2018sigmoid}, and RMSNorm \cite{zhang2019root} applied between the linear layers and activations. Because only the skew-symmetric component $M_\theta(t) - M_\theta(t)^\top$ enters the dynamics ODE, the raw output of this network does not require any structural constraints.

\paragraph{Spatial Projection Field $U_\theta(t, \omega)$.} 
The network mapping the spatio-temporal inputs $(\gamma(t), \omega) \mapsto U \in \mathbb{R}^{r}$ dictates how the spatial domain evolves. To efficiently capture high-frequency details while ensuring robust frequency mixing and representing the $1/k^2$ power spectrum typical of natural signals, we construct the final projection field $U_\theta(t, \omega)$ as the sum of a base NeRF-style multi-layer perceptron \cite{mildenhall2021nerf} $U^{\text{MLP}}_\theta(t, \omega)$ and a time-varying basis-element-selector residual:
\begin{equation}
    U_\theta(t, \omega) = U^{\text{MLP}}_\theta(t, \omega) + \sum_{|k|_\infty \le K-1} w_{k,\theta}(t) \frac{1}{1+\|k\|_2^2} \varphi_k(\omega).
\end{equation}
Across all experiments, the base network $U^{\text{MLP}}_\theta(t, \omega)$ concatenates $[\gamma(t), \omega, \eta(\omega)]$, where $\eta(\omega)$ represents multi-scale random Fourier features. We use $L=8$ levels log-spaced over frequencies $[1, 64]$, with $n_\ell = \max(1, 64/2^\ell)$ projections per level (yielding $\approx 254$ features). This input is fed into a 4-layer MLP of width $256$ with SiLU activations and RMSNorm, leading to a linear map to $\mathbb{R}^r$. For the residual component, the time-varying coefficients $w_{k,\theta}: \mathbb{R}^{512} \to \mathbb{R}^r$ are parameterized by a small 2-layer MLP operating on the time embedding $\gamma(t)$ (hidden widths $[64, 64]$, SiLU, RMSNorm). The $1 / (1+\|k\|_2^2)$ factor acts as a structural low-pass prior, while the bandwidth hyperparameter $K$ is set on a per-experiment basis.

\subsection{Compute and Optimization}
\label{appx:compute}
All experiments run on a single NVIDIA RTX 5090 (32 GB) using the Adam optimizer
\citep{kingma2014adam} with default hyperparameters and learning rate $10^{-3}$,
gradient clipping at $\|\nabla\|_{2}\le 1$, and gradient checkpointing through
the Cayley step ladder to keep VRAM constant in $L$. Each Cayley step costs
$\mathcal{O}(r^{3}+r^{2}D)$; for the typical $1024 \le D \le 4096$, $10 \le r \le 50$
this is well below the cost of a single MLP forward pass through $U_{\theta}$, so
wall-clock is dominated by the $L$ network evaluations of
$U_{\theta}(t_{\ell},\hat{\Omega})$ per training step. A typical 2-D run (CelebA RGB)
trains in $\sim 1.5$ hours.

\subsection{Additional Experiments}
\label{appx:additional}

\begin{figure}[h]\captionsetup{font=footnotesize}
    \centering
    \includegraphics[width=\linewidth]{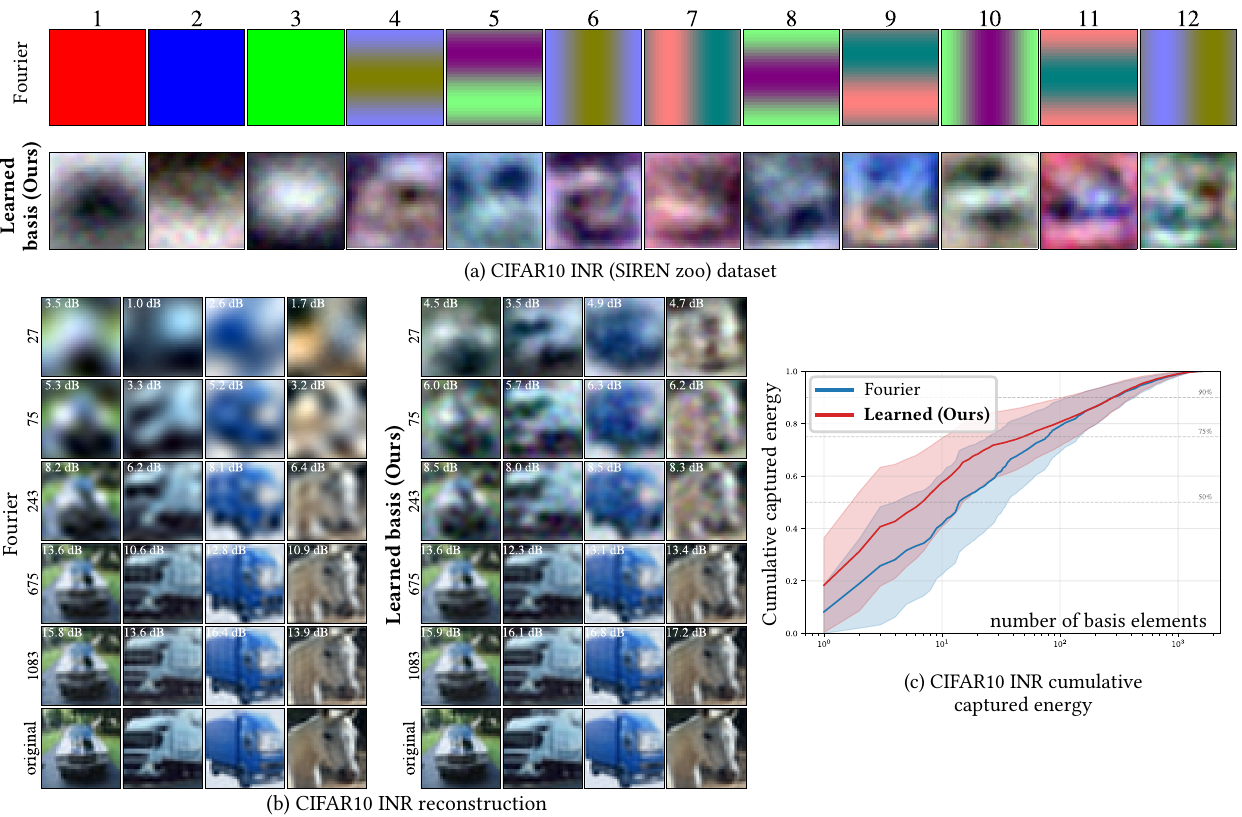}
    \caption{\textbf{PCA for CIFAR10 \cite{krizhevsky2009learning} SIREN Zoo.} \textbf{(a)} Visualizes the reference and learned basis after training; the learned bases resemble smooth principal components of the CIFAR10 dataset. \textbf{(b)} Reconstruction with the learned basis results in higher PSNRs. \textbf{(c)} Quantitatively, the captured energy is higher for our learned basis compared to the Fourier basis.}
    \label{fig:cifarten_inr_zoo}
\end{figure}

\begin{figure}[h]\captionsetup{font=footnotesize}
    \centering
    \includegraphics[width=\linewidth]{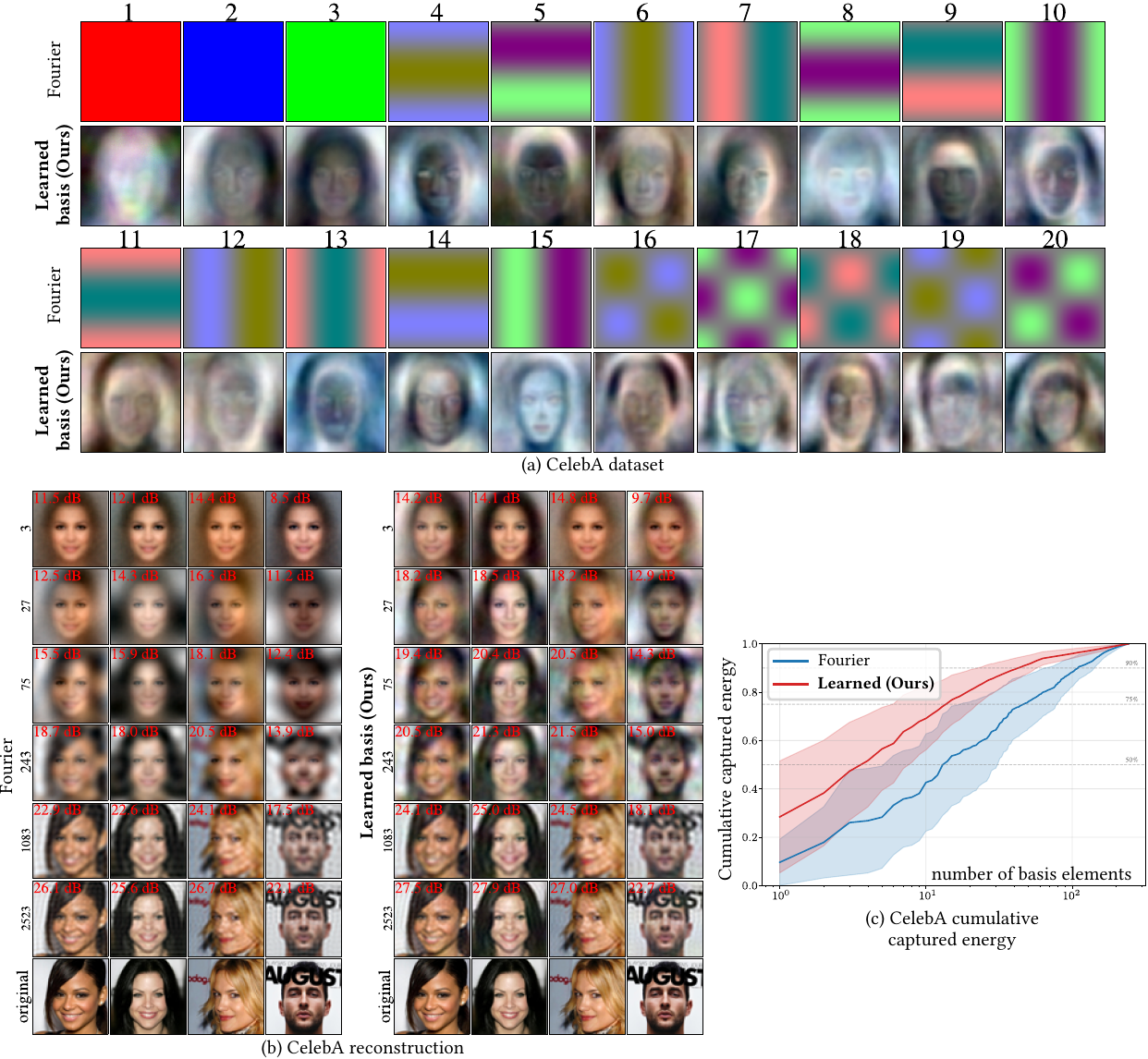}
    \caption{\textbf{PCA for CelebA.} \textbf{(a)} Visualizes the reference and learned basis; the learned bases resemble smooth ``eigenfaces'' of CelebA. \textbf{(b)} Reconstruction with the learned basis obtains higher PSNRs and captures the structure of the face better with fewer elements. \textbf{(c)} Quantitatively, the cumulative captured energy is higher for our learned basis compared to Fourier basis.}
    \label{fig:caleba_full}
\end{figure}

\begin{figure}[h]\captionsetup{font=footnotesize}
    \centering
    \includegraphics[width=\linewidth]{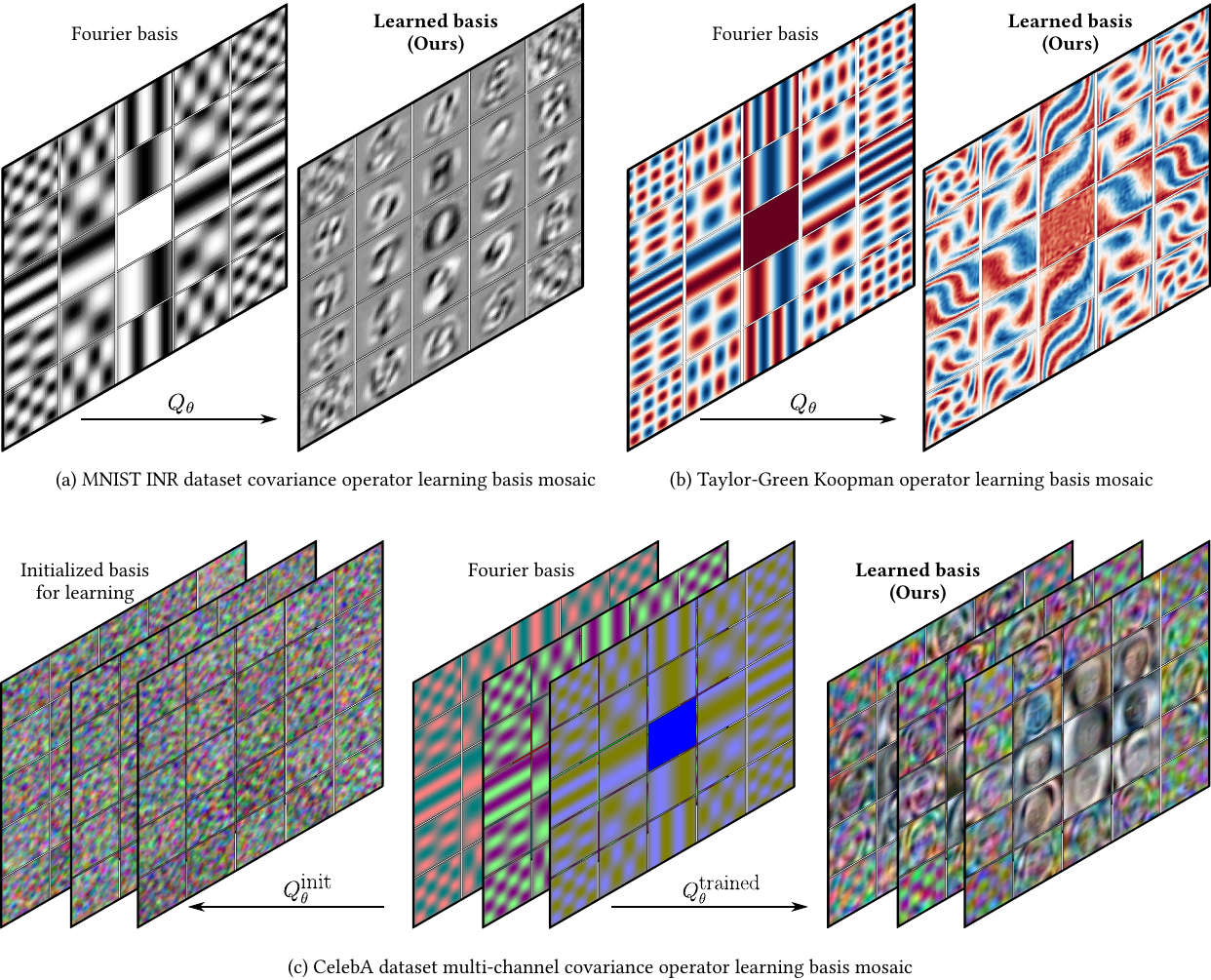}
    \caption{\textbf{Initial and Learned 2D and Multichannel Bases.} Visualization a truncated set of initial and learned bases across applications using our index notation (see \autoref{appx:index-prior}); the center of the mosaic corresponds to the direct current (DC) Fourier functions. \textbf{(a)} 2D Fourier and a learned bases fit to the principal components of the MNIST SIREN Zoo; the bases are visualized as a 2D mosaic indexed as $\varphi_{i_1,i_2}$ for Fourier and $Q_\theta \varphi_{i_1,i_2}$ for the learned basis where $i_1$ and $i_2$ correspond to the horizontal and vertical frequencies. \textbf{(b)} Similar results for the Vortices where each $Q_\theta \varphi_{i_1,i_2}$ corresponds to a dynamic mode of the Taylor-Green vortex field. \textbf{(c)} The 2D RGB-channel Fourier and learned basis for CelebA. Here the bases are visualized as a 2D mosaic with an extra channel dimension and indexed as $\varphi_{i_1,i_2,c}$. The basis $Q_\theta^{\text{init}} \varphi_{i_1,i_2,c}$ at initialization is obtained from a random path on the Lie manifold, but once trained, it learns to map to the eigenmodes of the covariance operator.}
    \label{fig:basis_mosaic}
\end{figure}

\begin{figure}
    \centering
    \includegraphics[width=\linewidth]{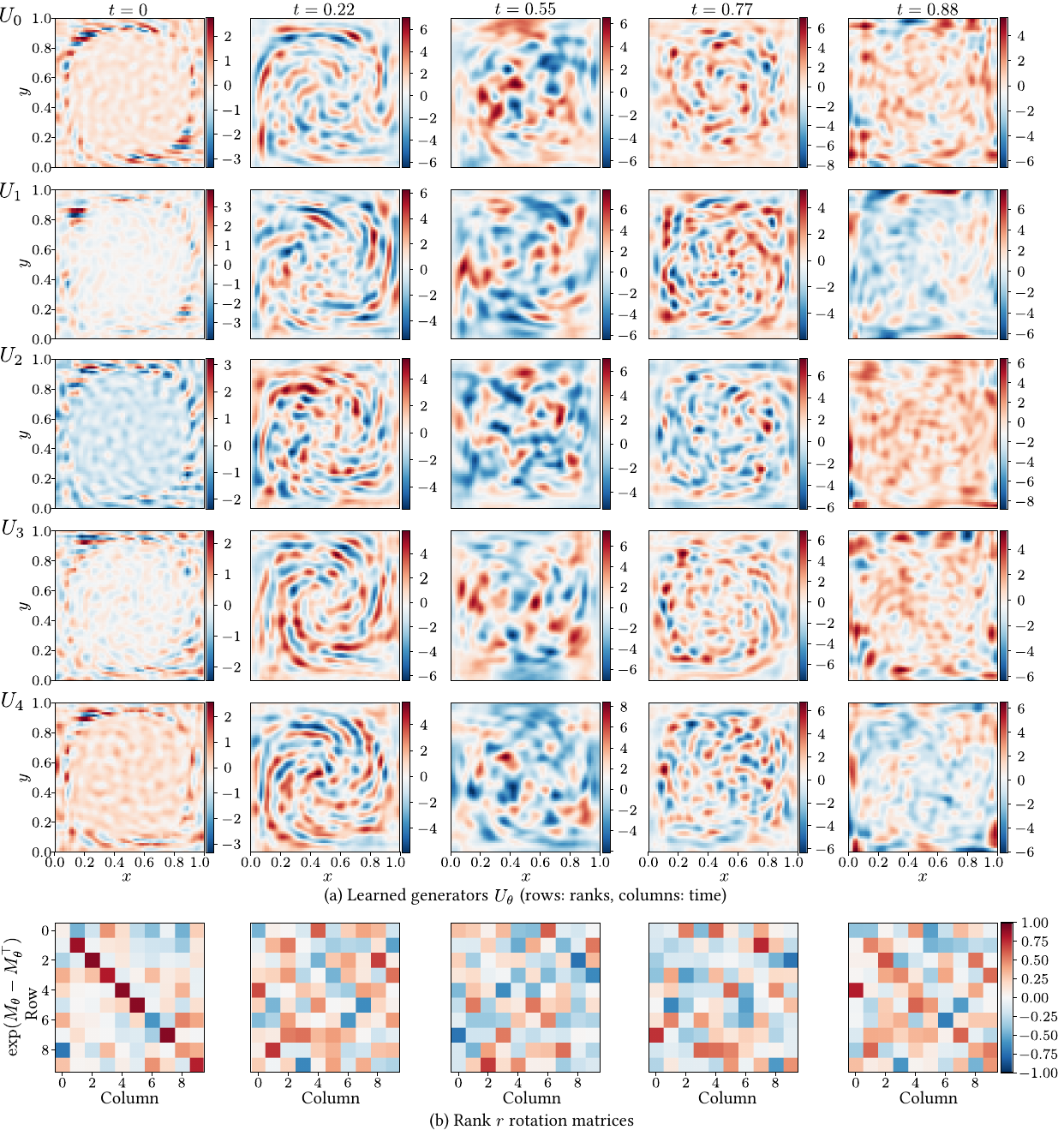}
    \caption{\textbf{Internal Network Dynamics.} This figure illustrates the learned neural network outputs that transform the Fourier basis into the dynamic modes of the Taylor-Green vortices. Specifically, we visualize the first five neural fields, $U_\theta^{0 \dots 4}$, across various time steps \textbf{(a)}, alongside the spatial rotations $\exp(M_\theta - M_\theta^\top)$ \textbf{(b)} that jointly characterize the generator $K_\theta(t)$. Passing the Fourier basis through these rank-$r$ planar rotations allows us to simulate the Koopman operator.}
    \label{fig:generators}
\end{figure}

Additional experiments and visualizations are provided in 
\autoref{fig:cifarten_inr_zoo}--\autoref{fig:generators}. 
\autoref{fig:cifarten_inr_zoo} evaluates the learned basis on the CIFAR10 SIREN 
Zoo \citep{krizhevsky2009learning}, showing that the learned bases resemble smooth 
principal components of the dataset and yield higher PSNRs and greater captured 
energy than the Fourier basis. \autoref{fig:caleba_full} presents analogous results 
for CelebA, where the learned bases take the form of smooth eigenfaces and achieve 
better reconstruction quality and higher cumulative captured energy with fewer basis 
elements. \autoref{fig:basis_mosaic} visualizes a truncated set of initial and 
learned bases across applications---including MNIST, Taylor-Green vortices, and 
CelebA---demonstrating that while the initialization corresponds to a random path on 
the Lie manifold, training recovers the eigenmodes of the covariance operator. 
Finally, \autoref{fig:generators} illustrates the internal network dynamics for the 
vortex experiment, visualizing the learned neural fields $U_\theta^{0\dots4}$ and 
spatial rotations $\exp(M_\theta - M_\theta^\top)$ that define the generator 
$K_\theta(t)$ and enable simulation of the Koopman operator via rank-$r$ planar 
rotations of the Fourier basis.

%% file: main.bib
@article{dupont2022data,
  title={From data to functa: Your data point is a function and you can treat it like one},
  author={Dupont, Emilien and Kim, Hyunjik and Eslami, SM and Rezende, Danilo and Rosenbaum, Dan},
  journal={arXiv preprint arXiv:2201.12204},
  year={2022}
}

@article{elfwing2018sigmoid,
  title={Sigmoid-weighted linear units for neural network function approximation in reinforcement learning},
  author={Elfwing, Stefan and Uchibe, Eiji and Doya, Kenji},
  journal={Neural networks},
  volume={107},
  pages={3--11},
  year={2018},
  publisher={Elsevier}
}

@article{mildenhall2021nerf,
  title={Nerf: Representing scenes as neural radiance fields for view synthesis},
  author={Mildenhall, Ben and Srinivasan, Pratul P and Tancik, Matthew and Barron, Jonathan T and Ramamoorthi, Ravi and Ng, Ren},
  journal={Communications of the ACM},
  volume={65},
  number={1},
  pages={99--106},
  year={2021},
  publisher={ACM New York, NY, USA}
}

@article{zhang2019root,
  title={Root mean square layer normalization},
  author={Zhang, Biao and Sennrich, Rico},
  journal={Advances in neural information processing systems},
  volume={32},
  year={2019}
}

@article{koopman1931hamiltonian,
  title={Hamiltonian systems and transformation in Hilbert space},
  author={Koopman, Bernard O},
  journal={Proceedings of the National Academy of Sciences},
  volume={17},
  number={5},
  pages={315--318},
  year={1931}
}

@article{lecun2010mnist,
  title={Gradient-based learning applied to document recognition},
  author={LeCun, Yann and Bottou, L{\'e}on and Bengio, Yoshua and Haffner, Patrick},
  journal={Proceedings of the IEEE},
  volume={86},
  number={11},
  pages={2278--2324},
  year={1998},
  publisher={IEEE}
}

@article{chen2018neural,
  title={Neural ordinary differential equations},
  author={Chen, Ricky TQ and Rubanova, Yulia and Bettencourt, Jesse and Duvenaud, David K},
  journal={Advances in neural information processing systems},
  volume={31},
  year={2018}
}

@book{golub2013matrix,
  title={Matrix computations},
  author={Golub, Gene H and Van Loan, Charles F},
  year={2013},
  publisher={JHU press}
}

@article{taylor1937mechanism,
  title={Mechanism of the production of small eddies from large ones},
  author={Taylor, Geoffrey Ingram and Green, Albert Edward},
  journal={Proceedings of the Royal Society of London. Series A-Mathematical and Physical Sciences},
  volume={158},
  number={895},
  pages={499--521},
  year={1937},
  publisher={The Royal Society London}
}

@article{gallier2006remarks,
  title={Remarks on the Cayley Representation of Orthogonal Matrices and on Perturbing the Diagonal of a Matrix to Make it Invertible},
  author={Gallier, Jean},
  journal={arXiv preprint math/0606320},
  year={2006}
}

@misc{kist_andreas_m_2021_5561092,
  author       = {Kist, Andreas M},
  title        = {{CelebA Dataset cropped with Haar-Cascade face detector}},
  year         = 2021,
  publisher    = {Zenodo},
  doi          = {10.5281/zenodo.5561092},
  url          = {https://doi.org/10.5281/zenodo.5561092}
}

@InProceedings{pmlr-v202-navon23a,
  title = 	 {Equivariant Architectures for Learning in Deep Weight Spaces},
  author =       {Navon, Aviv and Shamsian, Aviv and Achituve, Idan and Fetaya, Ethan and Chechik, Gal and Maron, Haggai},
  booktitle = 	 {Proceedings of the 40th International Conference on Machine Learning},
  pages = 	 {25790--25816},
  year = 	 {2023},
  editor = 	 {Krause, Andreas and Brunskill, Emma and Cho, Kyunghyun and Engelhardt, Barbara and Sabato, Sivan and Scarlett, Jonathan},
  volume = 	 {202},
  series = 	 {Proceedings of Machine Learning Research},
  month = 	 {23--29 Jul},
  publisher =    {PMLR},
  url = 	 {https://proceedings.mlr.press/v202/navon23a.html},
}

@article{kovachki2023neural,
  title={Neural operator: Learning maps between function spaces with applications to pdes},
  author={Kovachki, Nikola and Li, Zongyi and Liu, Burigede and Azizzadenesheli, Kamyar and Bhattacharya, Kaushik and Stuart, Andrew and Anandkumar, Anima},
  journal={Journal of Machine Learning Research},
  volume={24},
  number={89},
  pages={1--97},
  year={2023}
}

@article{hornik1989multilayer,
  title={Multilayer feedforward networks are universal approximators},
  author={Hornik, Kurt and Stinchcombe, Maxwell and White, Halbert},
  journal={Neural networks},
  volume={2},
  number={5},
  pages={359--366},
  year={1989},
  publisher={Elsevier}
}

@book{mallat1999wavelet,
  title={A wavelet tour of signal processing},
  author={Mallat, St{\'e}phane},
  year={1999},
  publisher={Elsevier}
}

@inproceedings{gholami2019anode,
  title     = {ANODE:  Unconditionally Accurate Memory-Efficient Gradients for Neural ODEs},
  author    = {Gholaminejad, Amir and Keutzer, Kurt and Biros, George},
  booktitle = {Proceedings of the Twenty-Eighth International Joint Conference on
               Artificial Intelligence, {IJCAI-19}},
  publisher = {International Joint Conferences on Artificial Intelligence Organization},
  pages     = {730--736},
  year      = {2019},
  month     = {7},
  doi       = {10.24963/ijcai.2019/103},
  url       = {https://doi.org/10.24963/ijcai.2019/103},
}

@article{krizhevsky2009learning,
  title={Learning multiple layers of features from tiny images},
  author={Krizhevsky, Alex and Hinton, Geoffrey},
  year={2009},
  journal={Technical Report},
  publisher={University of Toronto}
}

@inproceedings{finlay2020train,
  title={How to train your neural ode: the world of jacobian and kinetic regularization},
  author={Finlay, Chris and Jacobsen, J{\"o}rn-Henrik and Nurbekyan, Levon and Oberman, Adam},
  booktitle={International conference on machine learning},
  pages={3154--3164},
  year={2020},
  organization={PMLR}
}

@article{de2023deep,
  title={Deep learning on implicit neural representations of shapes},
  author={De Luigi, Luca and Cardace, Adriano and Spezialetti, Riccardo and Ramirez, Pierluigi Zama and Salti, Samuele and Di Stefano, Luigi},
  journal={The Eleventh International Conference on Learning Representations},
  year={2023}
}

@article{xu2022signal,
  title={Signal processing for implicit neural representations},
  author={Xu, Dejia and Wang, Peihao and Jiang, Yifan and Fan, Zhiwen and Wang, Zhangyang},
  journal={Advances in Neural Information Processing Systems},
  volume={35},
  pages={13404--13418},
  year={2022}
}

@book{stark1986probability,
  title={Probability, random processes, and estimation theory for engineers},
  author={Stark, Henry and Woods, John W},
  year={1986},
  publisher={Prentice-Hall, Inc.}
}

@book{ghanem2003stochastic,
  title={Stochastic finite elements: a spectral approach},
  author={Ghanem, Roger G and Spanos, Pol D},
  year={2003},
  publisher={Courier Corporation}
}

@article{wang2016functional,
  title={Functional data analysis},
  author={Wang, Jane-Ling and Chiou, Jeng-Min and M{\"u}ller, Hans-Georg},
  journal={Annual Review of Statistics and its application},
  volume={3},
  pages={257--295},
  year={2016},
  publisher={Annual Reviews}
}

@article{sitzmann2020implicit,
  title={Implicit neural representations with periodic activation functions},
  author={Sitzmann, Vincent and Martel, Julien and Bergman, Alexander and Lindell, David and Wetzstein, Gordon},
  journal={Advances in neural information processing systems},
  volume={33},
  pages={7462--7473},
  year={2020}
}

@inproceedings{xie2022neural,
  title={Neural fields in visual computing and beyond},
  author={Xie, Yiheng and Takikawa, Towaki and Saito, Shunsuke and Litany, Or and Yan, Shiqin and Khan, Numair and Tombari, Federico and Tompkin, James and Sitzmann, Vincent and Sridhar, Srinath},
  booktitle={Computer graphics forum},
  volume={41},
  number={2},
  pages={641--676},
  year={2022},
  organization={Wiley Online Library}
}

@incollection{hall2013lie,
  title={Lie groups, Lie algebras, and representations},
  author={Hall, Brian C},
  booktitle={Quantum Theory for Mathematicians},
  pages={333--366},
  year={2013},
  publisher={Springer}
}

@article{li2024physics,
  title={Physics-informed neural operator for learning partial differential equations},
  author={Li, Zongyi and Zheng, Hongkai and Kovachki, Nikola and Jin, David and Chen, Haoxuan and Liu, Burigede and Azizzadenesheli, Kamyar and Anandkumar, Anima},
  journal={ACM/JMS Journal of Data Science},
  volume={1},
  number={3},
  pages={1--27},
  year={2024},
  publisher={ACM New York, NY}
}

@article{li2020fourier,
  title={Fourier neural operator for parametric partial differential equations},
  author={Li, Zongyi and Kovachki, Nikola and Azizzadenesheli, Kamyar and Liu, Burigede and Bhattacharya, Kaushik and Stuart, Andrew and Anandkumar, Anima},
  journal={International Conference on Learning Representations},
  year={2021}
}

@article{li2020neural,
  title={Neural operator: Graph kernel network for partial differential equations},
  author={Li, Zongyi and Kovachki, Nikola and Azizzadenesheli, Kamyar and Liu, Burigede and Bhattacharya, Kaushik and Stuart, Andrew and Anandkumar, Anima},
  journal={arXiv preprint arXiv:2003.03485},
  year={2020}
}

@book{gottlieb1977numerical,
  title={Numerical analysis of spectral methods: theory and applications},
  author={Gottlieb, David and Orszag, Steven A},
  year={1977},
  publisher={SIAM}
}

@book{trefethen2000spectral,
  title={Spectral methods in MATLAB},
  author={Trefethen, Lloyd N},
  year={2000},
  publisher={SIAM}
}

@book{bremaud2002mathematical,
  title={Mathematical principles of signal processing: Fourier and wavelet analysis},
  author={Br{\'e}maud, Pierre},
  year={2002},
  publisher={Springer}
}

@inproceedings{levy2006laplace,
  title={Laplace-beltrami eigenfunctions towards an algorithm that" understands" geometry},
  author={L{\'e}vy, Bruno},
  booktitle={IEEE International Conference on Shape Modeling and Applications 2006 (SMI'06)},
  pages={13--13},
  year={2006},
  organization={IEEE}
}

@article{chang2024shape,
    author = {Chang, Yue and Benchekroun, Otman and Chiaramonte, Maurizio M. and Chen, Peter Yichen and Grinspun, Eitan},
    title = {Shape Space Spectra},
    year = {2025},
    issue_date = {August 2025},
    publisher = {Association for Computing Machinery},
    address = {New York, NY, USA},
    volume = {44},
    number = {4},
    issn = {0730-0301},
    url = {https://doi.org/10.1145/3731148},
    doi = {10.1145/3731148},
    journal = {ACM Trans. Graph.},
    month = jul,
    articleno = {121},
    numpages = {16}
}

@article{abdi2010principal,
  title={Principal component analysis},
  author={Abdi, Herv{\'e} and Williams, Lynne J},
  journal={Wiley interdisciplinary reviews: computational statistics},
  volume={2},
  number={4},
  pages={433--459},
  year={2010},
  publisher={Wiley Online Library}
}

@article{gerbrands1981relationships,
  title={On the relationships between SVD, KLT and PCA},
  author={Gerbrands, Jan J},
  journal={Pattern recognition},
  volume={14},
  number={1-6},
  pages={375--381},
  year={1981},
  publisher={Elsevier}
}

@article{tancik2020fourier,
  title={Fourier features let networks learn high frequency functions in low dimensional domains},
  author={Tancik, Matthew and Srinivasan, Pratul and Mildenhall, Ben and Fridovich-Keil, Sara and Raghavan, Nithin and Singhal, Utkarsh and Ramamoorthi, Ravi and Barron, Jonathan and Ng, Ren},
  journal={Advances in neural information processing systems},
  volume={33},
  pages={7537--7547},
  year={2020}
}

@article{modi2024simplicits,
  title={Simplicits: Mesh-free, geometry-agnostic elastic simulation},
  author={Modi, Vismay and Sharp, Nicholas and Perel, Or and Sueda, Shinjiro and Levin, David IW},
  journal={ACM Transactions on Graphics (TOG)},
  volume={43},
  number={4},
  pages={1--11},
  year={2024},
  publisher={ACM New York, NY, USA}
}

@inproceedings{sharp2023data,
  title={Data-free learning of reduced-order kinematics},
  author={Sharp, Nicholas and Romero, Cristian and Jacobson, Alec and Vouga, Etienne and Kry, Paul and Levin, David IW and Solomon, Justin},
  booktitle={ACM SIGGRAPH 2023 Conference Proceedings},
  pages={1--9},
  year={2023}
}

@inproceedings{azmoodeh2025continuous,
  title={Continuous-Time Signal Decomposition: An Implicit Neural Generalization of PCA and ICA},
  author={Azmoodeh, Shayan K and Subramani, Krishna and Smaragdis, Paris},
  booktitle={2025 IEEE 35th International Workshop on Machine Learning for Signal Processing (MLSP)},
  pages={1--6},
  year={2025},
  organization={IEEE}
}

@inproceedings{deng2022neuralef,
  title={Neuralef: Deconstructing kernels by deep neural networks},
  author={Deng, Zhijie and Shi, Jiaxin and Zhu, Jun},
  booktitle={International Conference on Machine Learning},
  pages={4976--4992},
  year={2022},
  organization={PMLR}
}

@article{ben2023deep,
  title={Deep learning solution of the eigenvalue problem for differential operators},
  author={Ben-Shaul, Ido and Bar, Leah and Fishelov, Dalia and Sochen, Nir},
  journal={Neural Computation},
  volume={35},
  number={6},
  pages={1100--1134},
  year={2023},
  publisher={MIT Press One Rogers Street, Cambridge, MA 02142-1209, USA journals-info~…}
}

@article{pfau2018spectral,
  title={Spectral inference networks: Unifying deep and spectral learning},
  author={Pfau, David and Petersen, Stig and Agarwal, Ashish and Barrett, David GT and Stachenfeld, Kimberly L},
  journal={International Conference on Learning Representations},
  year={2019}
}

@article{gemp2020eigengame,
  title={Eigengame: Pca as a nash equilibrium},
  author={Gemp, Ian and McWilliams, Brian and Vernade, Claire and Graepel, Thore},
  journal={International Conference on Learning Representations},
  year={2021}
}

@book{saad2011numerical,
  title={Numerical methods for large eigenvalue problems: revised edition},
  author={Saad, Yousef},
  year={2011},
  publisher={SIAM}
}

@inproceedings{yuce2022structured,
  title={A structured dictionary perspective on implicit neural representations},
  author={Y{\"u}ce, Gizem and Ortiz-Jim{\'e}nez, Guillermo and Besbinar, Beril and Frossard, Pascal},
  booktitle={Proceedings of the IEEE/CVF Conference on Computer Vision and Pattern Recognition},
  pages={19228--19238},
  year={2022}
}

@article{bengio2003out,
  title={Out-of-sample extensions for lle, isomap, mds, eigenmaps, and spectral clustering},
  author={Bengio, Yoshua and Paiement, Jean-fran{\c{c}}cois and Vincent, Pascal and Delalleau, Olivier and Roux, Nicolas and Ouimet, Marie},
  journal={Advances in neural information processing systems},
  volume={16},
  year={2003}
}

@article{ashtari2026futon,
  title={FUTON: Fourier Tensor Network for Implicit Neural Representations},
  author={Ashtari, Pooya and Behmandpoor, Pourya and Deligiannis, Nikos and Pizurica, Aleksandra},
  journal={arXiv preprint arXiv: 2602.13414},
  arxiv = {https://arxiv.org/abs/2602.13414},
  year={2026}
}

@article{kingma2014adam,
  title={Adam: A method for stochastic optimization},
  author={Kingma, Diederik P and Ba, Jimmy},
  journal={arXiv preprint arXiv:1412.6980},
  year={2014}
}

@article{lu2021learning,
  title={Learning nonlinear operators via DeepONet based on the universal approximation theorem of operators},
  author={Lu, Lu and Jin, Pengzhan and Pang, Guofeng and Zhang, Zhongqiang and Karniadakis, George Em},
  journal={Nature machine intelligence},
  volume={3},
  number={3},
  pages={218--229},
  year={2021}
}

@article{raissi2019physics,
  title={Physics-informed neural networks: A deep learning framework for solving forward and inverse problems involving nonlinear partial differential equations},
  author={Raissi, Maziar and Perdikaris, Paris and Karniadakis, George E},
  journal={Journal of Computational physics},
  volume={378},
  pages={686--707},
  year={2019}
}

@article{courant1954methods,
  title={Methods of mathematical physics, vol. I},
  author={Courant, Richard and Hilbert, David},
  journal={Phys. Today},
  volume={7},
  number={5},
  pages={17--17},
  year={1954}
}

@article{schmid2010dynamic,
  title={Dynamic mode decomposition of numerical and experimental data},
  author={Schmid, Peter J},
  journal={Journal of fluid mechanics},
  volume={656},
  pages={5--28},
  year={2010},
  publisher={Cambridge University Press}
}

@book{stoica2005spectral,
  title={Spectral analysis of signals},
  author={Stoica, Petre and Moses, Randolph L and others},
  volume={452},
  year={2005},
  publisher={Pearson Prentice Hall Upper Saddle River, NJ}
}

@article{iserles2001cayley,
  title={On Cayley-transform methods for the discretization of Lie-group equations},
  author={Iserles, Arieh},
  journal={Foundations of Computational Mathematics},
  volume={1},
  number={2},
  pages={129--160},
  year={2001},
  publisher={Springer}
}

@article{diele1998cayley,
  title={The Cayley transform in the numerical solution of unitary differential systems},
  author={Diele, Fasma and Lopez, Luciano and Peluso, R},
  journal={Advances in computational mathematics},
  volume={8},
  number={4},
  pages={317--334},
  year={1998},
  publisher={Springer}
}

@article{celledoni2014introduction,
  title={An introduction to Lie group integrators--basics, new developments and applications},
  author={Celledoni, Elena and Marthinsen, H{\aa}kon and Owren, Brynjulf},
  journal={Journal of Computational Physics},
  volume={257},
  pages={1040--1061},
  year={2014},
  publisher={Elsevier}
}

@book{higham2002accuracy,
  title={Accuracy and stability of numerical algorithms},
  author={Higham, Nicholas J},
  year={2002},
  publisher={SIAM}
}
